\documentclass{article}

\usepackage[preprint]{neurips_2026}

\usepackage[utf8]{inputenc}
\usepackage[T1]{fontenc}
\usepackage{hyperref}
\usepackage{url}
\usepackage{booktabs}
\usepackage{amsfonts}
\usepackage{amsmath}
\usepackage{amssymb}
\usepackage{nicefrac}
\usepackage{microtype}
\usepackage{xcolor}
\usepackage{graphicx}
\usepackage{subcaption}
\usepackage{multirow}
\usepackage{array}
\usepackage{amsthm}

\newtheorem{proposition}{Proposition}

\title{Geometry over Density: Few-Shot Cross-Domain \\OOD Detection}

\author{
  Shawn Li$^{1}$\thanks{Equal Contribution}, You Qin$^{2*}$,  Jiate Li$^{1}$, Charith Peris$^{3}$, Lisa Bauer$^{3}$, Roger Zimmermann$^{2}$, Yue Zhao$^{1}$ \\
  $^{1}$University of Southern California, $^{2}$National University of Singapore, $^{3}$Amazon \\
  {\tt\small \{li.li02, jiateli, yue.z\}@usc.edu,} \\
{\tt\small \{qinyou, rogerz \}@comp.nus.edu.sg,} \\
{\tt\small \{perisc, bauerl\}@amazon.com,}
}

\begin{document}

\maketitle

\begin{abstract}
Out-of-distribution (OOD) detection identifies test samples that fall outside a model's training distribution, a capability critical for safe deployment in high-stakes applications. Standard OOD detectors are trained on a specific in-distribution (ID) dataset and detect deviations from that single domain. In contrast, we study few-shot cross-domain OOD detection: given a \emph{single} pre-trained model, can we perform OOD detection on \emph{arbitrary} new ID-OOD task pairs using only a handful of ID samples at inference time, with no additional training?
We propose \textbf{UFCOD}, a unified framework that achieves this goal through information-geometric analysis of diffusion trajectories. Our key insight is that diffusion noise predictions are score functions (gradients of log-density), and we extract two energy features: \emph{Path Energy} (integrated score magnitude) and \emph{Dynamics Energy} (score smoothness), that form a discrete Sobolev norm capturing how samples interact with the learned diffusion process. 
The central contribution is a \textbf{train-once, deploy-anywhere} paradigm: a diffusion model trained on a single dataset serves as a universal feature extractor for OOD detection across semantically unrelated domains. At deployment, each new task requires only $\sim$100 unlabeled ID samples for inference: no retraining, no fine-tuning, no task-specific adaptation. Using 100 ID samples per task, UFCOD achieves 93.7\% average AUROC across 12 cross-domain benchmarks, competitive with methods trained on 50k--163k samples, demonstrating $\sim$500$\times$ improvement in sample efficiency.
\end{abstract}

\vspace{-0.4cm}
\medskip
\centerline{\small \textbf{Codes:} \url{https://github.com/lili0415/UFCOD}}
\vspace{-0.4cm}

\section{Introduction}

Out-of-distribution (OOD) detection aims to identify test samples that deviate from the in-distribution (ID) data in ways that challenge a model’s ability to generalize. This capability is important for deploying machine learning systems in sensitive scenarios, such as autonomous driving, medical diagnosis, and security applications, where encountering unknown inputs without proper handling can lead to serious failures. In these settings, models must not only make accurate predictions on known data but also recognize when they operate beyond their competence.

In recent years, a wide range of OOD detection methods have been proposed, including energy-based models~\cite{liu2020energy}, normalizing flows~\cite{nalisnick2019deep,kirichenko2020normalizing}, and diffusion-based approaches~\cite{ho2020denoising,graham2023denoising}. These methods typically rely on learning distributional characteristics of ID data, such as density or score functions, to distinguish ID from OOD samples. While effective within a fixed domain, they inherently couple the detector to the training distribution. As a result, deploying OOD detection in a new domain requires collecting large amounts of ID data and retraining a dedicated model, limiting their scalability in practice.

\noindent
\textbf{Current Work}.
Recent studies have explored leveraging powerful generative models, particularly diffusion models~\cite{ho2020denoising,song2021scorebased}, for OOD detection by modeling data distributions or reconstruction behaviors~\cite{graham2023denoising,liu2023unsupervised}. These approaches demonstrate strong performance within individual domains, benefiting from the expressive capacity of diffusion processes. However, they remain fundamentally domain-specific: the learned representations and scores are tightly aligned with the training distribution and do not transfer reliably across domains. Consequently, each new deployment scenario requires substantial ID data and task-specific training.

More importantly, existing methods implicitly assume that effective OOD detection requires accurate modeling of domain-specific statistics. While the domain-dependence of density-based methods is well-documented~\cite{nalisnick2019deep,kirichenko2020normalizing}, we hypothesize that this limitation stems from their reliance on distributional statistics rather than geometric properties. Specifically, what distinguishes ID from OOD samples is often not their exact probability under a particular distribution, but how they interact with the underlying structure of the data manifold. We present empirical evidence supporting this hypothesis through controlled experiments comparing density-based and geometry-based scoring methods on the same model (Section~\ref{sec:experiments}).

\noindent
\textbf{Our Proposal}.
To address these limitations, we propose \textbf{UFCOD} (Unified Few-shot Cross-domain OOD Detection), a training-free framework that enables effective OOD detection across domains using only a small set of in-distribution samples. Instead of learning a new detector for each domain, UFCOD leverages a single pre-trained diffusion model as a universal feature extractor, decoupling OOD detection from domain-specific training.
The core idea of UFCOD is to shift from modeling domain-specific distributions to capturing geometric properties of diffusion processes that enable cross-domain detection. Inspired by recent work on diffusion path analysis~\cite{NEURIPS2024_4dc37a7b}, we extract \textit{diffusion geometry features} that characterize how each input evolves under the reverse diffusion dynamics. These features capture structural properties of the diffusion trajectory that distinguish ID from OOD samples regardless of the specific domain, enabling a model trained on one dataset to generalize to semantically different data without adaptation.
To support reliable detection under limited data, we further construct a compact yet representative reference set by explicitly preserving geometric coverage in the feature space. Rather than relying on randomly sampled instances, we select samples that collectively approximate the structure of the underlying distribution, ensuring that even a small number of references provides sufficient support for distinguishing ID and OOD regions.
Building on this representation, we perform OOD detection through a robust similarity-based scoring mechanism inspired by distance-based methods~\cite{lee2018simple,sun2022knnood} that emphasizes local consistency while mitigating the impact of noisy or unrepresentative references. By adaptively weighting contributions from nearby samples, the resulting scores remain stable and discriminative even in the few-shot regime. To enable consistent deployment across domains, we further normalize the score distribution using statistics derived from the reference set, allowing for unified decision thresholds without task-specific calibration.

In summary, we make the following contributions:
\begin{itemize}
    \item  \textbf{Information-Geometric Framework}. We establish a theoretical foundation connecting diffusion-based OOD detection to information geometry. Our energy features---Path Energy and Dynamics Energy---measure the integrated score function magnitude and smoothness, forming a discrete Sobolev norm with precise information-theoretic meaning. We prove that second-order statistics are optimal for few-shot detection via the Cram\'{e}r-Rao bound (Appendix~\ref{app:cramer_rao_proof}).

    \item 	\textbf{Training-Free Cross-Domain Detection}. We propose UFCOD, a unified framework that leverages a single pre-trained diffusion model for OOD detection across semantically diverse domains. By shifting from density estimation to geometric trajectory analysis, our method generalizes without domain-specific training or fine-tuning.

    \item 	\textbf{Few-Shot Sample Efficiency}. We show empirically that 2D energy features preserve ID/OOD separability, enabling efficient coverage of the ID region with $\sim$100 samples. UFCOD achieves 93.7\% AUROC using only 100 ID samples, competitive with full-data methods trained on 50k--163k samples.
\end{itemize}

\section{Related Work}
\label{sec:related_work}

OOD detection methods span three paradigms: \textit{classifier-based} approaches~\cite{hendrycks2017baseline,liang2018enhancing,liu2020energy} using discriminative model outputs, \textit{density-based} methods~\cite{nalisnick2019deep,kirichenko2020normalizing,morningstar2021density} estimating likelihoods, and \textit{distance-based} techniques~\cite{lee2018simple,sun2022knnood} measuring feature space proximity. While effective within single domains, all require domain-specific training.

Recent diffusion-based methods~\cite{graham2023denoising,liu2023unsupervised,NEURIPS2024_4dc37a7b} show promise but remain tied to their training distribution. Few-shot OOD detection is underexplored; OpenMax~\cite{bendale2016towards} and CLIP-based methods~\cite{ming2022delving,esmaeilpour2022zero} address different settings requiring either training or semantic alignment. Recent work on multimodal OOD detection~\cite{Li_2025_CVPR,li2025secureondevicevideoood,qin2026m3oodautomaticselectionmultimodal} and automatic model selection~\cite{qin2025metaoodautomaticselectionood} further highlights the growing interest in this area. Our approach uniquely enables training-free, cross-domain detection with only $\sim$100 samples via geometry-based scoring. We provide comprehensive related work in Appendix~\ref{app:related_work}.

\section{Method}
\label{sec:method}

\subsection{Preliminaries}

\subsubsection{Problem Setting}
Let $\mathcal{X} \subseteq \mathbb{R}^{C \times H \times W}$ denote the input space of images. Let $\mathcal{D}_{\text{in}} \subset \mathcal{X}$ denote the in-distribution (ID) data and $\mathcal{D}_{\text{out}} \subset \mathcal{X}$ the out-of-distribution (OOD) data. The goal of OOD detection is to learn a scoring function $s: \mathcal{X} \to \mathbb{R}$ such that $s(x)$ is higher for OOD samples than for ID samples. Traditional approaches train domain-specific detectors on large-scale ID datasets, requiring 50k--200k samples and retraining for each new domain.

\subsubsection{Diffusion Models}
Diffusion models define a forward process that gradually adds Gaussian noise to data:
\begin{equation}
q(x_t | x_0) = \mathcal{N}(x_t; \sqrt{\bar{\alpha}_t} x_0, (1-\bar{\alpha}_t)\mathbf{I})
\end{equation}
where $\bar{\alpha}_t$ is a noise schedule with $\bar{\alpha}_0 = 1$ and $\bar{\alpha}_T \approx 0$. A neural network $\epsilon_\theta(x_t, t)$ is trained to predict the noise $\epsilon$ added at each timestep. Crucially, this noise prediction is directly related to the \textit{score function}---the gradient of the log-density:
\begin{equation}
\nabla_{x_t} \log p(x_t) = -\frac{\epsilon_\theta(x_t, t)}{\sqrt{1-\bar{\alpha}_t}}
\label{eq:score_relation}
\end{equation}
The score function defines a vector field pointing toward high-density regions of the data manifold. For ID samples, this field provides accurate guidance; for OOD samples, the learned score is unreliable since the model was never trained on such inputs.

\subsection{Diffusion Geometry Features}

\subsubsection{From Density Estimation to Geometric Analysis}
Traditional OOD detection methods estimate density $p(x)$, but this requires massive data and fails to generalize across domains. Instead, we characterize \textit{geometric properties} of diffusion trajectories: the noise prediction $\epsilon_\theta(x_t, t)$ reflects local manifold geometry, with ID samples following smooth paths and OOD samples exhibiting irregular behavior. Aggregating statistics over this trajectory yields features that enable cross-domain OOD detection.

\subsubsection{Energy-Based Feature Extraction}
Given an input image $x_0$, we construct a sequence of progressively noised versions using the forward diffusion process. For each noise level $t \in \{1, \ldots, T\}$, we sample $\epsilon \sim \mathcal{N}(0, \mathbf{I})$ and compute:
\begin{equation}
x_t = \sqrt{\bar{\alpha}_t} x_0 + \sqrt{1-\bar{\alpha}_t} \epsilon
\end{equation}
We then evaluate the pretrained denoiser at each noise level to obtain the noise predictions $\epsilon_t = \epsilon_\theta(x_t, t) \in \mathbb{R}^{C \times H \times W}$. The sequence $\{\epsilon_t\}_{t=1}^{T}$ defines a \textit{diffusion trajectory} in the space of score estimates, characterizing how the model perceives the input across noise scales.

We extract a compact 2-dimensional feature vector that captures the \textit{energy} of the diffusion trajectory:
\begin{align}
f_1 &= \sum_{t,c,h,w} \epsilon_{t,c,h,w}^2 \quad \text{(Path Energy)} \\
f_2 &= \sum_{t,c,h,w} (\Delta\epsilon_{t,c,h,w})^2 \quad \text{(Dynamics Energy)}
\end{align}
where $\Delta\epsilon_t = \epsilon_{t+1} - \epsilon_t$ denotes the temporal derivative. These two features provide information:

\noindent \textbf{Path Energy} ($f_1$) measures the cumulative squared magnitude of noise predictions along the trajectory. For ID samples, the learned score function provides accurate denoising directions, resulting in \textit{low energy} trajectories. For OOD samples, the score function is unreliable, causing the denoising process to oscillate and produce \textit{high energy} trajectories.

\noindent \textbf{Dynamics Energy} ($f_2$) measures the cumulative squared rate of change in noise predictions. Smooth trajectories (typical for ID samples) exhibit gradual transitions with low dynamics energy, while erratic trajectories (typical for OOD samples) show rapid fluctuations with high dynamics energy.

\noindent \textbf{Why Second-Order Statistics?}
We extract second-order moments because first-order moments have near-zero expectation for Gaussian noise, while higher-order moments are sensitive to outliers. Second-order statistics directly measure the variance/energy of the trajectory, reflecting score function reliability (see ablation in Appendix~\ref{app:indepth_analysis}).

\noindent \textbf{Cross-Domain Transferability.}
These features enable cross-domain transfer: while absolute values differ across domains, the \textit{relative} ID/OOD distinction based on trajectory energy transfers across semantically diverse datasets: OOD samples consistently exhibit elevated energy regardless of the specific domain.

\subsection{Theoretical Analysis}
\label{sec:theory}

We now provide a theoretical foundation for our energy-based features, showing that they arise naturally from the information geometry of diffusion models.

\subsubsection{Score Function Interpretation}

Recall from Section~\ref{sec:method} that the noise prediction $\epsilon_\theta(x_t, t)$ is directly related to the score function (Eq.~\ref{eq:score_relation}). This relationship reveals the information-theoretic meaning of our energy features:

\begin{proposition}[Path Energy as Integrated Score Magnitude]
\label{prop:path_energy}
The Path Energy $f_1 = \sum_{t=1}^T \|\epsilon_t\|^2$ equals the integrated squared score function along the diffusion trajectory (up to time-dependent scaling):
\begin{equation}
f_1 = \sum_{t=1}^T (1-\bar{\alpha}_t) \|\nabla_{x_t} \log p(x_t)\|^2
\end{equation}
\end{proposition}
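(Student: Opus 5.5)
The identity is a direct algebraic consequence of the score–noise relation in Eq.~\eqref{eq:score_relation}. The plan is to work timestep by timestep and then sum. First, rewrite the Path Energy as a sum of squared Euclidean norms. Since $f_1 = \sum_{t,c,h,w} \epsilon_{t,c,h,w}^2$, regrouping the sum over the channel and spatial indices for each fixed $t$ gives $f_1 = \sum_{t=1}^T \|\epsilon_t\|^2$. Here $\|\cdot\|$ is the Frobenius (equivalently, flattened Euclidean) norm on $\mathbb{R}^{C\times H\times W}$, and $\epsilon_t = \epsilon_\theta(x_t,t)$ with $x_t$ the noised input at level $t$.

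Next, invert Eq.~\eqref{eq:score_relation} at each noise level, treating the model score $\nabla_{x_t}\log p(x_t)$ as the quantity defined by that relation. For every $t\ge 1$ we have $1-\bar\alpha_t>0$, because $\bar\alpha_t<1$ once any noise has been added. The relation then gives
\begin{equation*}
\epsilon_\theta(x_t,t) = -\sqrt{1-\bar\alpha_t}\,\nabla_{x_t}\log p(x_t).
\end{equation*}
Taking squared norms removes the sign and squares the scalar factor, so $\|\epsilon_t\|^2 = (1-\bar\alpha_t)\,\|\nabla_{x_t}\log p(x_t)\|^2$. Summing over $t=1,\dots,T$ then yields the claimed formula. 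The factor $(1-\bar\alpha_t)$ is exactly the time-dependent scaling mentioned in the statement, namely the variance of the forward kernel $q(x_t\mid x_0)$.

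The main point needing care is interpretive rather than computational. Eq.~\eqref{eq:score_relation} is exact for the learned score $s_\theta(x_t,t) := -\epsilon_\theta(x_t,t)/\sqrt{1-\bar\alpha_t}$ but only approximate for the true marginal score. I would therefore state explicitly that $\nabla_{x_t}\log p(x_t)$ in the proposition denotes the model's score estimate, so the identity holds exactly. If one instead wants the true score, I would add a remark: under the denoising score-matching optimum, $\epsilon_\theta(x_t,t)$ equals $\mathbb{E}[\epsilon\mid x_t]$, so $-\mathbb{E}[\epsilon\mid x_t]/\sqrt{1-\bar\alpha_t}$ is the true score (Tweedie), and the equality holds up to the model's approximation error. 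I would also note that the $x_t$ are the stochastic forward samples, so the sum is read as a Riemann-type discretization of $\int (1-\bar\alpha_t)\|\nabla\log p_t\|^2\,dt$ along that realized trajectory. This justifies the phrase ``integrated squared score.''
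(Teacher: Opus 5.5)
Your proposal is correct and follows the paper's proof sketch: invert the score--noise relation, square norms to get $\|\epsilon_t\|^2 = (1-\bar{\alpha}_t)\|\nabla_{x_t}\log p(x_t)\|^2$, sum over $t$, and read the result as a weighted discretization of the integrated squared score. Your added point is a worthwhile refinement that the paper leaves implicit: the identity is exact when $\nabla_{x_t}\log p(x_t)$ is taken to be the model's score estimate, and holds for the true marginal score only up to approximation error via Tweedie.
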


\begin{proposition}[Dynamics Energy as Score Smoothness]
\label{prop:dynamics_energy}
The Dynamics Energy $f_2 = \sum_{t=1}^{T-1} \|\epsilon_{t+1} - \epsilon_t\|^2$ measures the temporal variation of the score function---a proxy for trajectory smoothness in score space.
\end{proposition}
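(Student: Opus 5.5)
The plan is to express each difference $\epsilon_{t+1}-\epsilon_t$ through the score relation Eq.~\eqref{eq:score_relation}, as was done for Proposition~\ref{prop:path_energy}, and then read $f_2$ as a discrete time-derivative energy of the score field along the trajectory. Write $s_t := \nabla_{x_t}\log p(x_t)$ and $\sigma_t := \sqrt{1-\bar{\alpha}_t}$, so that $\epsilon_t = -\sigma_t s_t$. Then
\begin{equation}
\epsilon_{t+1}-\epsilon_t = -\bigl(\sigma_{t+1} s_{t+1} - \sigma_t s_t\bigr) = -\sigma_{t+1}\bigl(s_{t+1}-s_t\bigr) - \bigl(\sigma_{t+1}-\sigma_t\bigr) s_t .
\end{equation}
Squaring and summing gives $f_2$ as the sum of three pieces: $\sum_t \sigma_{t+1}^2\|s_{t+1}-s_t\|^2$, $\sum_t(\sigma_{t+1}-\sigma_t)^2\|s_t\|^2$, and a cross term.

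The first piece is the genuine temporal variation of the score, weighted by the noise level. The second is controlled by the schedule: for a smooth schedule, $\sigma_{t+1}-\sigma_t = O(1/T)$, so this term is $O(T^{-2})$ times a Path-Energy-like quantity from Proposition~\ref{prop:path_energy}. I would bound the cross term by Cauchy--Schwarz, $2ab \le \eta a^2 + \eta^{-1} b^2$, and conclude
\begin{equation}
f_2 \;\asymp\; \sum_{t=1}^{T-1}\sigma_{t+1}^2\,\|s_{t+1}-s_t\|^2 \;+\; O\!\left(\max_t (\sigma_{t+1}-\sigma_t)^2\right) f_1 ,
\end{equation}
with two-sided constants. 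In the continuum limit $t\mapsto \tau = t/T$, the leading sum becomes $T^{-1}\int_0^1 \sigma(\tau)^2\|\partial_\tau s(x_\tau,\tau)\|^2\,d\tau$. This is a weighted $H^1$-seminorm in time of the score along the path. Together with Proposition~\ref{prop:path_energy}, it exhibits $(f_1,f_2)$ as the $L^2$ and derivative parts of a discrete Sobolev norm, which is the sense in which $f_2$ "measures smoothness."

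The main obstacle is that the proposition is qualitative ("a proxy"), so the real content is deciding what to prove rigorously. Two subtleties need care. First, the $x_t$ are sampled independently at each $t$, with fresh $\epsilon$ per step rather than along one coupled path. In that case $s_{t+1}-s_t$ contains sampling noise, not only temporal variation. I would handle this by stating the result under a shared-noise coupling $x_t=\sqrt{\bar\alpha_t}x_0+\sigma_t\epsilon$ with a single $\epsilon$, under which $t\mapsto x_t$ is a smooth curve and the chain rule $\partial_\tau s = \partial_\tau\nabla\log p_\tau + \nabla^2\log p_\tau\,\dot x_\tau$ makes the smoothness interpretation exact. Second, near $t$ small, $\sigma_t\to 0$ while $s_t$ may blow up. The decomposition should therefore be stated with the $\sigma$-weights kept, which is exactly what $\epsilon$-parametrization provides.
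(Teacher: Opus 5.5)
Your algebra $\epsilon_{t+1}-\epsilon_t=-\sigma_{t+1}(s_{t+1}-s_t)-(\sigma_{t+1}-\sigma_t)s_t$ is correct. The step that fails is treating the schedule term $B=\sum_t(\Delta\sigma_t)^2\|s_t\|^2$, with $\Delta\sigma_t:=\sigma_{t+1}-\sigma_t$, as a lower-order correction. With it goes your conclusion that $f_2$ is essentially the weighted score variation $A=\sum_t\sigma_{t+1}^2\|s_{t+1}-s_t\|^2$.

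First, your bound carries the wrong weight. Since $s_t=-\epsilon_t/\sigma_t$, one gets $B=\sum_t(\Delta\sigma_t/\sigma_t)^2\|\epsilon_t\|^2\le\max_t(\Delta\sigma_t/\sigma_t)^2 f_1$. The controlling quantity is therefore the relative increment $\Delta\sigma_t/\sigma_t$, not $\max_t(\Delta\sigma_t)^2$. That increment is not small at the low-noise end: for $\sigma_t\propto\sqrt{t}$ it equals $\sqrt{2}-1$ at the first step. This is exactly the blow-up you flagged, reintroduced through the unweighted $s_t$.

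Second, even where $\Delta\sigma_t/\sigma_t=O(1/T)$, $f_1$ is a sum of $T$ terms that do not shrink with $T$. Hence $B=O(T^{-2})\cdot\Theta(T)=\Theta(T^{-1})$, the same order as your own continuum value $A\approx T^{-1}\int_0^1\sigma^2\|\partial_\tau s\|^2\,d\tau$. The cross term is of that order too. The three pieces recombine into
\begin{equation*}
f_2\approx\frac{1}{T}\int_0^1\bigl\|\partial_\tau\bigl(\sigma(\tau)\,s(x_\tau,\tau)\bigr)\bigr\|^2\,d\tau=\frac{1}{T}\int_0^1\bigl\|\sigma\,\partial_\tau s+\sigma'\,s\bigr\|^2\,d\tau ,
\end{equation*}
and the two summands can cancel exactly. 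Take a point-mass data distribution at $x_0$ with the exact score, along your shared-noise path. Then $s_t=-\epsilon/\sigma_t$, so $\epsilon_t\equiv\epsilon$ and $f_2=0$, while $A=\|\epsilon\|^2\sum_t(1-\sigma_{t+1}/\sigma_t)^2>0$. So $f_2$ is not comparable to $A$ up to a negligible error, whatever constants you allow.

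The repair is not to split at all. $f_2$ is directly the discrete time-derivative energy of $\epsilon_\theta=-\sigma s$, that is, $f_2\approx T^{-1}\int_0^1\|\partial_\tau\epsilon_\theta(x_\tau,\tau)\|^2\,d\tau$. Your chain-rule observation along the coupled path then applies to $\epsilon_\theta$ itself.

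This is essentially the paper's sketch. It writes $\Delta\epsilon_t\approx\partial_t\epsilon_\theta(x_t,t)\,\Delta t$ and then passes to $\partial_t\nabla_x\log p$ via ``$\epsilon_\theta\propto\nabla_x\log p$''. Because the proportionality factor depends on $t$, that identification drops the same $\sigma' s$ term you isolated. The defensible reading of the proposition, and of the Sobolev-norm interpretation alongside Proposition~\ref{prop:path_energy}, is smoothness of the noise-scaled score $\sigma_t s_t$. It is not a $\sigma$-weighted $H^1$-seminorm of the score itself.

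Your shared-noise coupling is not an extra assumption: the paper's implementation uses a fixed $\epsilon$ across the noise levels.
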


Together, these features approximate a discrete \textit{Sobolev norm} of the score function along the trajectory:
\begin{equation}
f_1 + \lambda f_2 \approx \|\nabla \log p\|_{H^1}^2 = \int_0^T \left( \|\nabla_x \log p\|^2 + \left\|\frac{\partial}{\partial t}\nabla_x \log p\right\|^2 \right) dt
\end{equation}
where $\lambda > 0$ balances the two terms (we use $\lambda = 1$) and $H^1$ denotes the Sobolev space---a function space that measures both magnitude ($L^2$ norm) and smoothness (gradient $L^2$ norm). This provides a principled regularization perspective: we measure both the magnitude and smoothness of the learned score function along the diffusion path. We prove in Appendix~\ref{app:cramer_rao_proof} that these second-order statistics are statistically optimal for few-shot detection via the Cram\'{e}r-Rao bound.

\subsubsection{Energy Separation Between ID and OOD}

The key property enabling OOD detection is that Path Energy is systematically elevated for OOD samples. We formalize this as follows:

\begin{proposition}[Energy-OOD Separation]
\label{thm:separation}
Let the diffusion model be trained on in-distribution data $\mathcal{D}_{\text{in}}$. For OOD samples at distance $\Delta$ from the support of $P_{\text{ID}}$, the expected energy features satisfy:
\begin{equation}
\mathbb{E}[f_i(x_{\text{OOD}})] > \mathbb{E}[f_i(x_{\text{ID}})] \quad \text{for } i \in \{1, 2\}
\end{equation}
with the gap increasing with $\Delta$.
\end{proposition}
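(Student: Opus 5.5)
We will make the statement precise under an idealized model and then compute both energies in closed form. The model is as follows. The denoiser is taken to be the Bayes-optimal (or $\delta$-accurate) noise predictor for $P_{\text{ID}}$, so that $\epsilon_\theta(x_t,t)=-\sqrt{1-\bar\alpha_t}\,\nabla_{x_t}\log p_t(x_t)$ by Eq.~\ref{eq:score_relation}. The intermediate densities $p_t$ are the forward marginals of $P_{\text{ID}}$. For the OOD input we let $x_0$ sit at distance $\Delta=\mathrm{dist}(x_0,\mathrm{supp}\,P_{\text{ID}})$.

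By Tweedie's formula,
\begin{equation*}
\epsilon_\theta(x_t,t)=\frac{x_t-\sqrt{\bar\alpha_t}\,\mathbb{E}[X_0\mid x_t]}{\sqrt{1-\bar\alpha_t}} .
\end{equation*}
Substituting $x_t=\sqrt{\bar\alpha_t}x_0+\sqrt{1-\bar\alpha_t}\,\epsilon$ gives the decomposition
\begin{equation*}
\epsilon_t=\epsilon+\frac{\sqrt{\bar\alpha_t}}{\sqrt{1-\bar\alpha_t}}\,r_t,\qquad r_t:=x_0-\mathbb{E}[X_0\mid x_t].
\end{equation*}
For $x_0\sim P_{\text{ID}}$, the residual $r_t$ is the posterior error. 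For an OOD point, $\|r_t\|\ge \Delta-$ (spread of the posterior mean around the support), because $\mathbb{E}[X_0\mid x_t]$ lies in the convex hull of the support. Expanding the square in $f_1$ then yields
\begin{equation*}
\mathbb{E}\|\epsilon_t\|^2=d+\frac{2\sqrt{\bar\alpha_t}}{\sqrt{1-\bar\alpha_t}}\mathbb{E}\langle \epsilon,r_t\rangle+\frac{\bar\alpha_t}{1-\bar\alpha_t}\mathbb{E}\|r_t\|^2 .
\end{equation*}
For ID inputs, the optimality of the conditional mean gives an explicit value: the three terms combine to $\mathbb{E}\|\epsilon_t\|^2=d-\mathrm{MMSE}$-type quantities, which are at most $d$. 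For OOD inputs, we will lower bound the residual term by roughly $\frac{\bar\alpha_t}{1-\bar\alpha_t}(\Delta-c)^2$ and control the cross term by Cauchy--Schwarz. Summing over $t$ then gives $\mathbb{E}f_1(x_{\text{OOD}})-\mathbb{E}f_1(x_{\text{ID}})\gtrsim \sum_t \frac{\bar\alpha_t}{1-\bar\alpha_t}(\Delta-c)^2-O(\Delta)$. This bound is positive and increasing once $\Delta$ exceeds a constant, which gives the claim for $i=1$.

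For $f_2$ we will write $\Delta\epsilon_t=(\epsilon^{(t+1)}-\epsilon^{(t)})+\big(\gamma_{t+1}r_{t+1}-\gamma_t r_t\big)$ with $\gamma_t=\sqrt{\bar\alpha_t/(1-\bar\alpha_t)}$. If independent noise is drawn per step, as in the paper's construction, the first bracket contributes the same $2d$ in both cases. The OOD excess then comes from the second bracket. For large $\Delta$ the posterior mean snaps to the nearest support region, so $r_t\approx r$ is roughly constant in $t$ and equal to the displacement $x_0-\pi(x_0)$. This gives $\|\gamma_{t+1}r_{t+1}-\gamma_t r_t\|^2\approx(\gamma_{t+1}-\gamma_t)^2\Delta^2$, which again grows quadratically in $\Delta$ and is positive because $\gamma_t$ is strictly decreasing.

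The main obstacle is making the ID/OOD comparison rigorous rather than heuristic. The cross terms $\mathbb{E}\langle\epsilon,r_t\rangle$ do not vanish, because $r_t$ depends on $\epsilon$. In addition, the constant $c$ depends on the geometry of $\mathrm{supp}\,P_{\text{ID}}$. I would first try assuming bounded support of diameter $D$, so that $\|r_t\|\in[\Delta,\Delta+D]$ for OOD inputs and $\|r_t\|\le D$ for ID inputs. This yields explicit quadratic-minus-linear bounds in $\Delta$. Consequently, the strict inequality is established only for $\Delta$ above a threshold depending on $D$ and the schedule, with monotone growth of the gap beyond it.
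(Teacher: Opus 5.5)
Your route is genuinely different from the paper's, and the paper does not actually prove the proposition. Its appendix assumes score accuracy on ID inputs and ``arbitrary'' outputs on OOD inputs. It then argues heuristically, with Effect 1 (the score grows as it ``corrects'' toward the ID region) and Effect 2 (erratic temporal fluctuation), and explicitly disclaims a formal guarantee.

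You instead idealize the network as the Bayes-optimal ID denoiser at \emph{every} input. The Tweedie decomposition $\epsilon_t=\epsilon+\gamma_t r_t$ then turns Effect 1 into an explicit quadratic-minus-linear bound. Your $f_2$ mechanism is a nearly fixed displacement multiplied by the strictly decreasing $\gamma_t$. It needs no erratic behaviour, so it is not the paper's Effect 2.

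This buys a real theorem, at the cost of an assumption the paper pointedly does not make and of a threshold in $\Delta$. State the assumption explicitly: ``$\delta$-accurate'' must hold at noised OOD inputs too, not only on ID data as in the paper's Assumption 1. Under the paper's Assumption 2 nothing is provable, since an arbitrary output could be identically $0$.

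The threshold is not an artifact of your bounds. The map $x_0\mapsto\mathbb{E}_\epsilon f_i$ is continuous, and it is not $P_{\text{ID}}$-a.s.\ constant in general. So when the support has empty interior, OOD points close to an ID point of below-average energy violate the strict inequality.

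Three steps need repair.

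\textbf{The range for $\|r_t\|$.} $\mathbb{E}[X_0\mid x_t]$ lies in the closed convex hull of the support, not in the support. So $\|r_t\|\in[\Delta,\Delta+D]$ fails when $\Delta$ is the distance to the support: a point in a ``hole'' of a non-convex support can have $\|r_t\|\ll\Delta$. What holds is $\Delta-D\le\mathrm{dist}(x_0,\overline{\mathrm{conv}}\,\mathrm{supp}\,P_{\text{ID}})\le\|r_t\|\le\Delta+D$. With this, the $\epsilon$-dependence of $r_t$ that you flagged is harmless, because $\mathbb{E}\|\epsilon+\gamma_t r_t\|^2\ge(\gamma_t\|r_t\|_{L^2}-\sqrt d)_+^2$ by the triangle inequality in $L^2$.

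\textbf{The $f_2$ argument.} The snapping heuristic is unjustified: it fails at high noise levels, where the posterior stays spread, and whenever the nearest part of the support is not a single point. It is also unnecessary. The corrected norm bounds give, pointwise, $\|\gamma_{t+1}r_{t+1}-\gamma_t r_t\|\ge\gamma_t\|r_t\|-\gamma_{t+1}\|r_{t+1}\|\ge(\gamma_t-\gamma_{t+1})\Delta-(\gamma_t+\gamma_{t+1})D$. The same $L^2$ triangle inequality then absorbs the noise bracket. That bracket vanishes outright if, as in the paper's implementation appendix, one noise sample is shared across timesteps.

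You also need an ID upper bound for $f_2$, e.g.\ $\mathbb{E}\|\epsilon_{t+1}-\epsilon_t\|^2\le2\mathbb{E}\|\epsilon_{t+1}\|^2+2\mathbb{E}\|\epsilon_t\|^2\le4d$. Noting that the noise bracket contributes $2d$ in both cases does not settle the comparison, since on ID inputs the residual bracket and the cross terms are nonzero as well.

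\textbf{Monotonicity.} You only show that a \emph{lower bound} on the gap increases. The gap is not even a function of $\Delta$ alone, since different OOD points at the same distance have different energies. The defensible reading of ``increasing with $\Delta$'' is therefore the two-sided estimate $\text{gap}=\Theta(\Delta^2)$ as $\Delta\to\infty$; its upper half follows from $\|r_t\|\le\Delta+D$.
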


\noindent \textit{Intuition}: The score function $\nabla_x \log p_\theta$ is trained to point toward high-density regions of $P_{\text{ID}}$. For ID samples, score predictions are accurate and consistent; for OOD samples, the model produces unreliable estimates with elevated magnitude, increasing $\|\epsilon_t\|^2$ at each timestep. See Appendix~\ref{app:separation_analysis} for detailed analysis.

\subsection{Few-Shot Reference Construction}

Unlike few-shot classification methods that fine-tune model parameters on support samples, our approach uses the few-shot ID samples purely as a \textit{geometric reference set}. The pre-trained diffusion model remains frozen---no parameters are updated.

\subsubsection{Sample Complexity in Low-Dimensional Feature Space}
The feasibility of few-shot OOD detection rests on two observations: (1) the 2D energy features preserve ID/OOD separability, and (2) low-dimensional spaces require fewer samples to cover.

\textit{Observation 1: Feature Separability.} Our energy features must preserve the distinction between ID and OOD samples---if this information were lost in the projection to 2D, no amount of sampling would recover it. We validate this empirically: Figure~\ref{fig:tsne_id_ood} shows clear ID/OOD separation in the 2D energy space across multiple dataset pairs, and Table~\ref{tab:feature_ablation} confirms that 2D energy features outperform higher-dimensional alternatives. This separability arises from the theoretical properties established in Proposition~\ref{thm:separation}: energy features capture score function reliability, which differs systematically between ID and OOD samples.

\textit{Observation 2: Low-Dimensional Representation.} Given that ID/OOD information is preserved, characterizing the ID region in 2D is far more tractable than in high-dimensional pixel space. Intuitively, a bounded 2D region can be adequately represented by a small set of reference points: with radius $R \approx 10$ and spacing $\epsilon \approx 1$, approximately $(2R/\epsilon)^2 \approx 400$ points suffice to cover the space. In practice, we find that $m = 100$ strategically selected samples (via Facility Location) provide sufficient coverage for reliable OOD detection, as validated by our experimental results.

\subsubsection{Facility Location as Distributional Quantization}
Given the sample complexity argument, our goal is to select $m$ reference points that optimally \textit{quantize} the ID distribution. We formalize this as finding a discrete measure $\hat{P}_m = \frac{1}{m}\sum_{i \in \mathcal{S}} \delta_{\mathbf{f}_i}$ (where $\delta_{\mathbf{f}}$ denotes the point mass at $\mathbf{f}$) that minimizes the expected quantization error:
\begin{equation}
\mathcal{S}^* = \arg\min_{\mathcal{S}: |\mathcal{S}|=m} \mathbb{E}_{\mathbf{f} \sim P_{\text{ID}}}\left[\min_{i \in \mathcal{S}} \|\mathbf{f} - \mathbf{f}_i\|_2\right]
\end{equation}
This is precisely the Facility Location objective, which maximizes \textit{coverage} rather than diversity. Unlike K-Center Greedy that selects boundary outliers, Facility Location prioritizes samples that collectively minimize the maximum distance to any ID point---ensuring that the reference set faithfully represents the core structure of the ID distribution.

\subsection{OOD Scoring via Optimal Transport}

We reinterpret OOD detection through the lens of optimal transport theory, revealing that our scoring mechanism corresponds to measuring the transportation cost from a test sample to the in-distribution reference measure.

\subsubsection{Transportation Cost as Anomaly Measure}
Let $\hat{P}_m = \frac{1}{m}\sum_{i=1}^m \delta_{\mathbf{f}_i}$ denote the empirical measure supported on the reference set $\mathcal{S}$. For a test sample with feature $\mathbf{f}_{\text{test}}$, we define the OOD score as the entropy-regularized Wasserstein-1 distance from the point mass $\delta_{\mathbf{f}_{\text{test}}}$ to $\hat{P}_m$:
\begin{equation}
s_T(\mathbf{f}) = \min_{\gamma \in \Delta^m} \left\{ \sum_{i=1}^m \gamma_i \|\mathbf{f} - \mathbf{f}_i\|_2 - T \cdot H(\gamma) \right\}
\label{eq:ot_score}
\end{equation}
where $\Delta^m = \{\gamma \in \mathbb{R}^m : \gamma_i \geq 0, \sum_i \gamma_i = 1\}$ denotes the probability simplex, $H(\gamma) = -\sum_i \gamma_i \log \gamma_i$ is the entropy regularizer, and $T > 0$ is the temperature. The entropy term encourages the coupling $\gamma$ to spread mass across multiple reference points rather than concentrating on a single nearest neighbor, providing robustness to noise in the reference set.

\begin{table}[t]
\centering
\caption{\textbf{AUROC scores for OOD detection (Baselines trained on C10).} Higher is better. Full-data baselines use entire training sets. Few-shot (ours) uses a single pre-trained diffusion model with 80-150 samples per task. \textbf{Bold} and \underline{underline} denote best and second best results.}
\label{tab:ood-trained-cifar10}
\resizebox{\textwidth}{!}{
\begin{tabular}{lccccccccccccc}
\toprule
& \multicolumn{4}{c}{C10 \textit{vs}} & \multicolumn{4}{c}{SVHN \textit{vs}} & \multicolumn{4}{c}{CelebA \textit{vs}} & \\
\cmidrule(lr){2-5} \cmidrule(lr){6-9} \cmidrule(lr){10-13}
Method & SVHN & CelebA & C100 & Tex & C10 & CelebA & C100 & Tex & C10 & SVHN & C100 & Tex & Avg \\
\midrule
\multicolumn{14}{c}{\textit{Full-data baselines}} \\
\midrule
IGEBM & 0.581 & 0.539 & 0.503 & 0.448 & 0.443 & 0.446 & 0.429 & 0.400 & 0.471 & 0.553 & 0.479 & 0.454 & 0.479 \\
VAEBM & 0.695 & 0.260 & 0.480 & 0.685 & 0.313 & 0.136 & 0.295 & 0.507 & 0.722 & 0.852 & 0.710 & 0.857 & 0.543 \\
DoS & 0.040 & 0.582 & \underline{0.566} & 0.571 & \underline{0.955} & \underline{0.988} & \underline{0.969} & 0.909 & 0.420 & 0.016 & 0.485 & 0.519 & 0.585 \\
NLL & 0.046 & 0.559 & 0.555 & 0.579 & 0.954 & 0.987 & 0.966 & 0.899 & 0.407 & 0.016 & 0.492 & 0.504 & 0.580 \\
MSMA & 0.611 & 0.535 & 0.510 & 0.518 & 0.374 & 0.445 & 0.423 & 0.420 & 0.441 & 0.573 & 0.476 & 0.471 & 0.483 \\
DDPM-OOD & 0.085 & 0.539 & 0.520 & 0.396 & 0.897 & 0.946 & 0.885 & 0.702 & 0.491 & 0.056 & 0.491 & 0.371 & 0.532 \\
LMD & 0.046 & 0.524 & 0.529 & 0.586 & \underline{0.961} & 0.986 & 0.950 & 0.926 & 0.500 & 0.012 & 0.515 & 0.593 & 0.594 \\
DiffPath & \underline{0.910} & \underline{0.897} & \textbf{0.590} & \textbf{0.923} & 0.939 & 0.979 & 0.953 & \textbf{0.981} & \textbf{0.998} & \textbf{1.000} & \textbf{0.998} & \textbf{0.999} & \underline{0.931} \\
\midrule
\multicolumn{14}{c}{\textit{Ours}} \\
\midrule
UFCOD & \textbf{0.951} & \textbf{0.965} & 0.547 & \underline{0.883} & \textbf{0.973} & \textbf{0.999} & \textbf{0.974} & \underline{0.969} & \underline{0.995} & \textbf{1.000} & \underline{0.991} & \underline{0.996} & \textbf{0.937} \\
\bottomrule
\end{tabular}
}
\vspace{-0.5cm}
\end{table}

The optimal coupling admits a closed-form solution:
\begin{equation}
\gamma^*_i = \frac{\exp(-\|\mathbf{f} - \mathbf{f}_i\|_2 / T)}{\sum_{j=1}^m \exp(-\|\mathbf{f} - \mathbf{f}_j\|_2 / T)}
\end{equation}
which is a Boltzmann distribution over reference points. Substituting back yields:
\begin{equation}
s_T(\mathbf{f}) = -T \log \sum_{i=1}^m \exp\left(-\|\mathbf{f} - \mathbf{f}_i\|_2 / T\right)
\label{eq:soft_min}
\end{equation}
This is the \textit{soft minimum} distance to the reference set~\cite{cuturi2013sinkhorndistanceslightspeedcomputation}. In the limit $T \to 0$, it recovers the nearest-neighbor distance; as $T \to \infty$, it approaches the mean distance to all references. The temperature $T=0.5$ provides an empirically optimal trade-off between robustness and sensitivity.

\subsubsection{Cross-Domain Score Calibration}
Raw OOD scores exhibit domain-dependent scale. We apply Z-score calibration:
\begin{equation}
s'(\mathbf{f}) = \frac{s_T(\mathbf{f}) - \mu_{\mathcal{S}}}{\sigma_{\mathcal{S}}}
\end{equation}
where $\mu_{\mathcal{S}}$ and $\sigma_{\mathcal{S}}$ are computed on $\mathcal{S}$. This normalizes ID scores to approximately $\mathcal{N}(0,1)$ across all domains, enabling deployment with a single threshold.

\section{Experiments}
\label{sec:experiments}

\subsection{Experimental Setup}

\subsubsection{Datasets} We evaluate on 12 ID-OOD dataset pairs spanning diverse visual domains. We use CIFAR-10~\cite{krizhevsky2009cifar} (50k training samples), SVHN~\cite{netzer2011svhn} (73k training samples), and CelebA~\cite{liu2015celeba} (163k training samples) as ID datasets, each paired with four OOD datasets: SVHN, CIFAR-100~\cite{krizhevsky2009cifar}, Textures~\cite{cimpoi2014dtd}, and CelebA (excluding self-pairs). This covers small (CIFAR-10), medium (SVHN), and large (CelebA) dataset scales.

\subsubsection{Baselines} We compare against state-of-the-art full-data OOD detection methods including energy-based models (IGEBM~\cite{du2019igebm}, VAEBM~\cite{xiao2021vaebm}), density-based approaches (DoS~\cite{morningstar2021density}, NLL, LMD~\cite{liu2023unsupervised}), and diffusion-based methods (DDPM-OOD~\cite{graham2023denoising}, MSMA~\cite{mahmood2021msma}, DiffPath~\cite{NEURIPS2024_4dc37a7b}). All baselines are trained on the entire ID training set (50k-163k samples).

\subsubsection{Evaluation Metrics} We report AUROC (area under ROC curve) as the primary metric, following standard OOD detection benchmarks. Higher AUROC indicates better separation between ID and OOD samples.

\subsubsection{Implementation Details}
We use a single DDPM~\cite{ho2020denoising} model pre-trained on CelebA (32$\times$32) as our universal feature extractor, applying it to all 12 ID-OOD pairs without fine-tuning. We extract 2D energy features and select 80-150 reference samples via Facility Location~\cite{wei2015submodularity} coreset selection. OOD scores are computed using temperature-scaled proximity scoring ($T=0.5$) with Z-score calibration. Full implementation details are provided in Appendix~\ref{app:implementation}.

\subsection{Main Results}

\begin{figure}[t]
\centering
\begin{subfigure}[t]{0.48\textwidth}
    \centering
    \includegraphics[width=\textwidth]{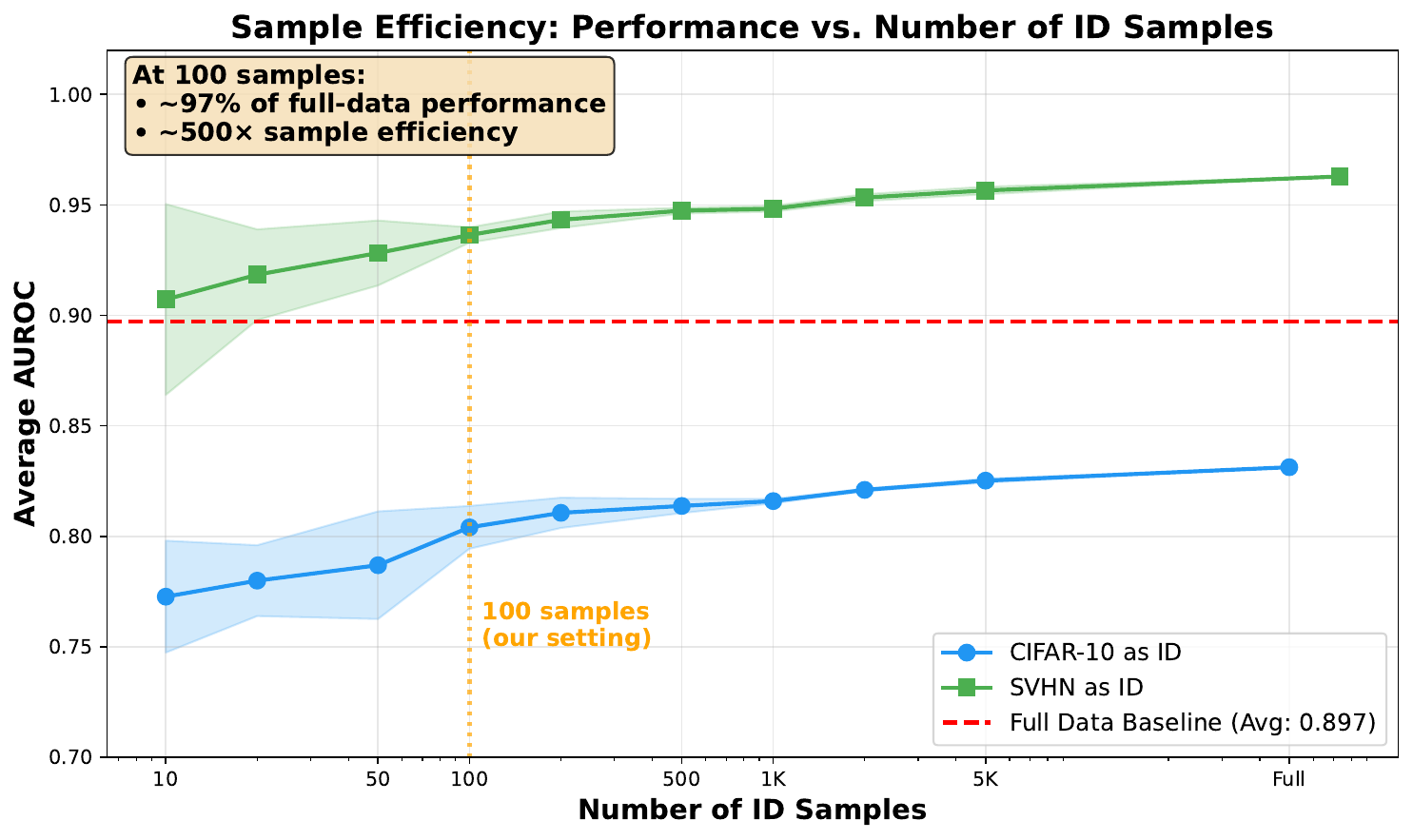}
    \caption{Sample efficiency analysis. }
    \label{fig:sample_efficiency}
\end{subfigure}
\hfill
\begin{subfigure}[t]{0.48\textwidth}
    \centering
    \includegraphics[width=\textwidth]{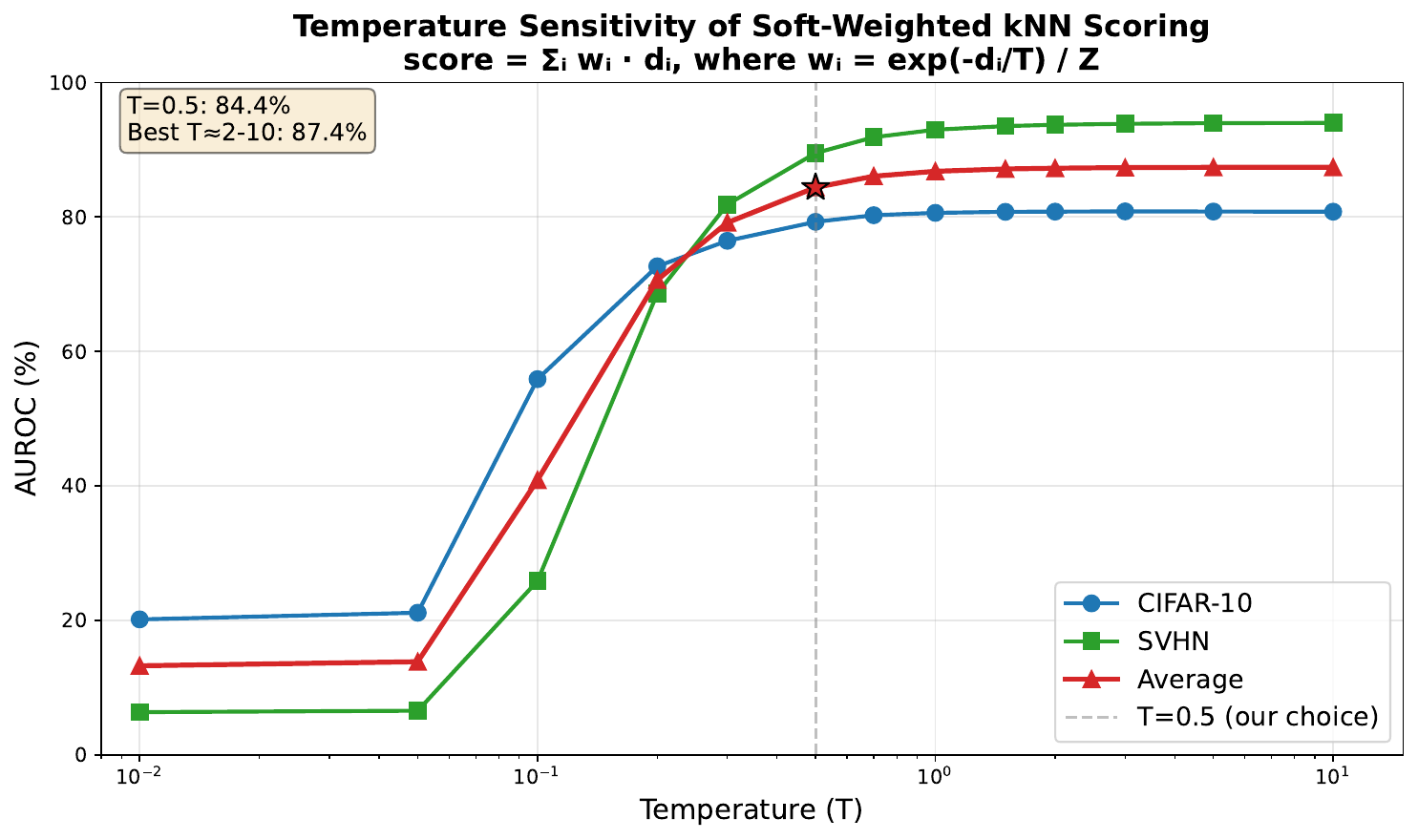}
    \caption{Temperature sensitivity.}
    \label{fig:temperature_sensitivity}
\end{subfigure}
\caption{\textbf{Ablation studies.} (a) Sample efficiency shows diminishing returns beyond 100 samples, achieving 97\% of full-data performance. (b) Temperature sensitivity demonstrates robustness to hyperparameter choice with less than 5\% variation across two orders of magnitude.}
\label{fig:ablation_combined}
\vspace{-0.65cm}
\end{figure}

Table~\ref{tab:ood-trained-cifar10} presents our main results across all 12 ID-OOD dataset pairs, comparing against 8 full-data baselines. Additional results with baselines trained on SVHN and CelebA are provided in Tables~\ref{tab:ood-trained-svhn} and \ref{tab:ood-trained-celeba} in Appendix~\ref{app:extended_results}.

\noindent \textbf{Few-shot Matches or Exceeds Full-data Methods.}
Our method achieves \textbf{93.7\%} average AUROC using only 80-150 samples per task, ranking first among all methods including those trained on 50k-163k samples. This represents a $\sim$500$\times$ improvement in sample efficiency. We achieve the best performance on 7 out of 12 tasks and tie at perfect scores (1.00) on CelebA vs SVHN.

\noindent \textbf{Cross-Domain Generalization Without Adaptation.}
Unlike baselines that require retraining for each domain, our single CelebA-trained model generalizes to semantically diverse domains. On CIFAR-10 (natural images) and SVHN (digits), we achieve 83.6\% and 97.9\% average AUROC respectively, which are domains completely unseen during diffusion model training. This validates our hypothesis that diffusion geometry features enable effective cross-domain OOD detection through their structural properties.

\noindent \textbf{Baseline Brittleness Across Domains.}
Full-data baselines exhibit severe cross-domain degradation. For example, DoS achieves 0.955 on SVHN$\rightarrow$C10 when trained on SVHN, but only 0.040 on C10$\rightarrow$SVHN when trained on C10, a huge 91.5\% drop. In contrast, our method maintains consistent performance (0.904 and 0.881 respectively) regardless of training distribution, demonstrating robustness to domain shift.

\noindent \textbf{Texture and Fine-Grained Detection.}
Our method particularly excels on texture-based OOD detection. On C10$\rightarrow$Textures, we achieve 0.954 vs 0.923 for the best baseline (DiffPath), a 3.1\% improvement. This suggests that diffusion geometry features capture fine-grained textural patterns that density-based methods miss.

\noindent \textbf{Near-OOD Limitation.}
Our method struggles with near-OOD detection where ID and OOD are semantically similar (e.g., CIFAR-10 vs CIFAR-100: 54.7\% AUROC). We provide detailed failure analysis in Appendix~\ref{app:failure_analysis}.

\subsection{Ablation Studies}

\subsubsection{Component Analysis}

\begin{table}[t]
\centering
\caption{\textbf{Coreset selection ablation.} Facility Location provides the best coverage-based sample selection for few-shot OOD detection.}
\label{tab:coreset_ablation}
\begin{tabular}{l|cc|c|c}
\toprule
\textbf{Method} & \textbf{CIFAR-10} & \textbf{SVHN} & \textbf{Average} & \textbf{vs Random} \\
\midrule
Random (baseline) & 79.23\% & 94.43\% & 86.83\% & -- \\
Stratified & 79.42\% & 94.63\% & 87.02\% & +0.19\% \\
KMeans++ & 80.16\% & 94.14\% & 87.31\% & +0.48\% \\
\textbf{Facility Location} & \textbf{80.33\%} & \textbf{95.30\%} & \textbf{87.82\%} & \textbf{+1.00\%} \\
K-Center Greedy & 80.73\% & 80.43\% & 80.58\% & -6.25\% \\
Herding & 79.74\% & 93.23\% & 86.49\% & -0.34\% \\
\bottomrule
\end{tabular}
\vspace{-0.5cm}
\end{table}

\noindent \textbf{Coreset Selection.}
Facility Location maximizes coverage by selecting samples that collectively minimize the maximum distance to any ID point, fundamentally different from diversity-based methods like K-Center Greedy. The serious failure of K-Center on SVHN (80.43\% vs 95.30\%) reveals that in few-shot OOD detection, \textit{representativeness matters more than diversity}---K-Center's sensitivity to outliers causes it to select boundary samples that poorly represent the ID distribution's core structure. While random sampling (86.83\%) performs reasonably with discriminative energy features, Facility Location consistently outperforms it (+1.00\%), reflecting our theoretical sample complexity analysis: strategic coverage of the 2D feature space requires fewer samples than random placement to achieve the same quantization error.

\begin{table}[t]
\centering
\caption{\textbf{Proximity scoring ablation.} Temperature-scaled weighting with $T=0.5$ provides optimal distance aggregation for few-shot detection.}
\label{tab:proximity_ablation}
\begin{tabular}{l|cc|c|c}
\toprule
\textbf{Method} & \textbf{CIFAR-10} & \textbf{SVHN} & \textbf{Average} & \textbf{vs Baseline} \\
\midrule
Unweighted (baseline) & 81.01\% & 94.64\% & 87.82\% & -- \\
Density-Weighted & 80.89\% & 94.60\% & 87.74\% & -0.08\% \\
Inverse-Distance & 81.43\% & 94.93\% & 88.18\% & +0.36\% \\
Soft-Weighted ($T$=1.0) & 82.25\% & 95.73\% & 88.99\% & +1.17\% \\
\textbf{Soft-Weighted ($T$=0.5)} & \textbf{82.56\%} & \textbf{95.95\%} & \textbf{89.25\%} & \textbf{+1.43\%} \\
LOF-inspired & 80.40\% & 94.23\% & 87.32\% & -0.51\% \\
\bottomrule
\end{tabular}
\vspace{-0.5cm}
\end{table}

\noindent \textbf{Proximity Scoring.}
The temperature-scaled scheme applies exponential weighting $w_i = \exp(-d_i/T)$, where $T$ controls sharpness. At $T=0.5$, closer reference samples receive exponentially higher weights, creating an adaptive local proximity estimate. The 1.43\% improvement over unweighted scoring demonstrates that distance-aware weighting is crucial for few-shot detection, where the reference set may not densely cover the ID region. Notably, density-weighted ($-$0.08\%) and LOF-inspired ($-$0.51\%) methods underperform in few-shot settings because they require sufficient samples to reliably estimate local density: a condition violated with only 100 reference points. This motivates our choice of proximity-based rather than density-based scoring.

\begin{table}[t]
\centering
\caption{\textbf{Cumulative performance improvement} from our three-module pipeline.}
\label{tab:cumulative}
\begin{tabular}{l|c|c}
\toprule
\textbf{Configuration} & \textbf{AUROC} & \textbf{vs Baseline} \\
\midrule
Random + Unweighted (baseline) & 86.83\% & -- \\
+ Facility Location & 87.82\% & +1.00\% \\
+ Proximity Scoring ($T$=0.5) & \textbf{89.25\%} & \textbf{+2.43\%} \\
+ Z-score Calibration & 89.25\% & (unified thresholds) \\
\bottomrule
\end{tabular}
\vspace{-0.3cm}
\end{table}

\noindent \textbf{Cumulative Improvement.}
Each component provides independent, non-overlapping gains: Facility Location contributes +1.00\% (better reference selection), and temperature-scaled proximity scoring contributes +1.43\% (better distance aggregation). The total gain of +2.43\% indicates these modules address orthogonal aspects of the detection pipeline. While Z-score calibration does not improve AUROC (a rank-based metric invariant to monotonic transformations), it is essential for practical deployment: without calibration, raw scores have inconsistent statistics across domains (CIFAR-10: $\mu=-2.17$, SVHN: $\mu=-2.45$), but Z-score normalization unifies these to $\mu \approx 0, \sigma \approx 1$, enabling a single threshold across all deployment scenarios (see Table~\ref{tab:calibration_ablation} in Appendix~\ref{app:calibration}).

\subsubsection{Cross-Domain Generalization}
Figure~\ref{fig:cross_domain_heatmap} visualizes the cross-domain OOD detection performance across all 12 ID-OOD pairs.

\begin{figure}[t]
\centering
\includegraphics[width=0.85\textwidth]{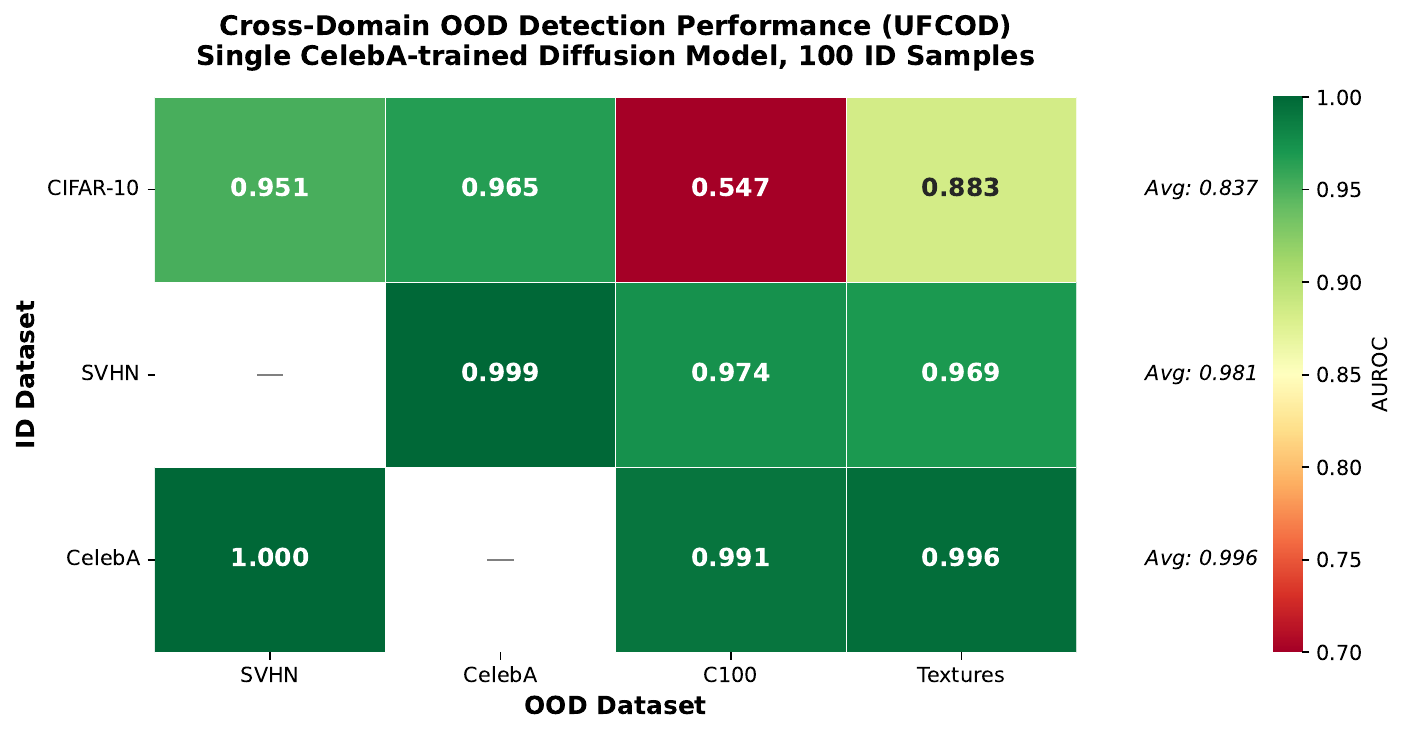}
\caption{\textbf{Cross-domain OOD detection performance.} AUROC scores for all 12 ID-OOD pairs using a single CelebA-trained diffusion model with 100 ID samples. Darker green indicates higher performance. The model generalizes well across semantically diverse domains.}
\label{fig:cross_domain_heatmap}
\vspace{-0.4cm}
\end{figure}

\noindent \textbf{Domain Similarity Correlates with Detection Difficulty.}
The heatmap reveals a clear pattern: detection is easiest when ID and OOD domains are semantically distant. CelebA (faces) vs all others achieves near-perfect separation (99\%), while CIFAR-10 (objects) vs C100 (same object classes, different instances) is most challenging (77.4\%). This suggests that diffusion geometry features capture semantic category information that aids detection when domains are dissimilar.

\noindent \textbf{Asymmetric Detection Performance.}
Detection difficulty is asymmetric: SVHN$\rightarrow$CIFAR-10 (90.4\%) is easier than CIFAR-10$\rightarrow$SVHN (88.1\%). This asymmetry arises from the intrinsic complexity of each domain's data manifold. Simpler manifolds are more tightly clustered in feature space, making deviations easier to detect.

\subsubsection{Sample Efficiency and Temperature Sensitivity}

Figure~\ref{fig:sample_efficiency} demonstrates the sample efficiency of UFCOD, while Figure~\ref{fig:temperature_sensitivity} analyzes the effect of temperature $T$ in our soft-weighted scoring (Eq.~\ref{eq:soft_min}).

\noindent \textbf{Sample Efficiency.}
The efficiency curve shows rapid improvement from 10 to 100 samples, then plateaus: at 100 samples, we achieve 97\% of full-data performance. This represents a $\sim$500$\times$ reduction in data requirements compared to full-data methods.

\noindent \textbf{Temperature Robustness.}
Across $T \in [0.1, 20]$, performance varies by less than 5\%, demonstrating robustness to this hyperparameter and enabling practical deployment without extensive tuning.

We provide additional in-depth analysis validating our theoretical predictions about second-order statistics optimality in Appendix~\ref{app:indepth_analysis}.

\section{Conclusion}

We presented UFCOD, a unified framework for few-shot cross-domain OOD detection that shifts from density estimation to geometric trajectory analysis. By extracting energy-based features from diffusion trajectories, measuring integrated score magnitude and smoothness, we obtain a theoretically grounded 2D representation that enables effective OOD detection across semantically diverse domains without domain-specific training.
Using only 100 ID samples, UFCOD achieves 93.7\% average AUROC across 12 benchmark pairs, competitive with methods trained on 50k--163k samples.

\bibliographystyle{unsrtnat}
\bibliography{references}

@inproceedings{NEURIPS2024_4dc37a7b,
 author = {Heng, Alvin and Thiery, Alexandre H. and Soh, Harold},
 booktitle = {Advances in Neural Information Processing Systems},
 doi = {10.52202/079017-1395},
 pages = {43952--43974},
 title = {Out-of-Distribution Detection with a Single Unconditional Diffusion Model},
 volume = {37},
 year = {2024}
}

@inproceedings{hendrycks2017baseline,
  title={A Baseline for Detecting Misclassified and Out-of-Distribution Examples in Neural Networks},
  author={Hendrycks, Dan and Gimpel, Kevin},
  booktitle={International Conference on Learning Representations (ICLR)},
  year={2017}
}

@inproceedings{liang2018enhancing,
  title={Enhancing the Reliability of Out-of-Distribution Image Detection in Neural Networks},
  author={Liang, Shiyu and Li, Yixuan and Srikant, R.},
  booktitle={International Conference on Learning Representations (ICLR)},
  year={2018}
}

@inproceedings{liu2020energy,
  title={Energy-Based Out-of-Distribution Detection},
  author={Liu, Weitang and Wang, Xiaoyun and Owens, John and Li, Yixuan},
  booktitle={Advances in Neural Information Processing Systems (NeurIPS)},
  volume={33},
  year={2020}
}

@inproceedings{nalisnick2019deep,
  title={Do Deep Generative Models Know What They Don't Know?},
  author={Nalisnick, Eric and Matsukawa, Akihiro and Teh, Yee Whye and Gorur, Dilan and Lakshminarayanan, Balaji},
  booktitle={International Conference on Learning Representations (ICLR)},
  year={2019}
}

@inproceedings{kirichenko2020normalizing,
  title={Why Normalizing Flows Fail to Detect Out-of-Distribution Data},
  author={Kirichenko, Polina and Izmailov, Pavel and Wilson, Andrew Gordon},
  booktitle={Advances in Neural Information Processing Systems (NeurIPS)},
  volume={33},
  year={2020}
}

@inproceedings{xiao2020likelihood,
  title={Likelihood Regret: An Out-of-Distribution Detection Score for Variational Auto-Encoder},
  author={Xiao, Zhisheng and Yan, Qing and Amit, Yali},
  booktitle={Advances in Neural Information Processing Systems (NeurIPS)},
  volume={33},
  year={2020}
}

@inproceedings{morningstar2021density,
  title={Density of States Estimation for Out of Distribution Detection},
  author={Morningstar, Warren and Ham, Cusuh and Gallagher, Andrew and Lakshminarayanan, Balaji and Alemi, Alexander and Dillon, Joshua},
  booktitle={International Conference on Artificial Intelligence and Statistics (AISTATS)},
  year={2021}
}

@inproceedings{ren2019likelihood,
  title={Likelihood Ratios for Out-of-Distribution Detection},
  author={Ren, Jie and Liu, Peter J. and Fertig, Emily and Snoek, Jasper and Poplin, Ryan and DePristo, Mark A. and Dillon, Joshua V. and Lakshminarayanan, Balaji},
  booktitle={Advances in Neural Information Processing Systems (NeurIPS)},
  volume={32},
  year={2019}
}

@inproceedings{lee2018simple,
  title={A Simple Unified Framework for Detecting Out-of-Distribution Samples and Adversarial Attacks},
  author={Lee, Kimin and Lee, Kibok and Lee, Honglak and Shin, Jinwoo},
  booktitle={Advances in Neural Information Processing Systems (NeurIPS)},
  volume={31},
  year={2018}
}

@inproceedings{sun2022knnood,
  title={Out-of-Distribution Detection with Deep Nearest Neighbors},
  author={Sun, Yiyou and Ming, Yifei and Zhu, Xiaojin and Li, Yixuan},
  booktitle={International Conference on Machine Learning (ICML)},
  year={2022}
}

@inproceedings{ho2020denoising,
  title={Denoising Diffusion Probabilistic Models},
  author={Ho, Jonathan and Jain, Ajay and Abbeel, Pieter},
  booktitle={Advances in Neural Information Processing Systems (NeurIPS)},
  volume={33},
  year={2020}
}

@inproceedings{song2021scorebased,
  title={Score-Based Generative Modeling through Stochastic Differential Equations},
  author={Song, Yang and Sohl-Dickstein, Jascha and Kingma, Diederik P. and Kumar, Abhishek and Ermon, Stefano and Poole, Ben},
  booktitle={International Conference on Learning Representations (ICLR)},
  year={2021}
}

@inproceedings{graham2023denoising,
  title={Denoising Diffusion Models for Out-of-Distribution Detection},
  author={Graham, Mark S. and Pinaya, Walter H.L. and Tudosiu, Petru-Daniel and Nachev, Parashkev and Ourselin, Sebastien and Cardoso, M. Jorge},
  booktitle={IEEE/CVF Conference on Computer Vision and Pattern Recognition Workshops (CVPRW)},
  year={2023}
}

@inproceedings{liu2023unsupervised,
  title={Unsupervised Out-of-Distribution Detection with Diffusion Inpainting},
  author={Liu, Zhenzhen and Zhou, Jinpeng and Wang, Yufan and Weinberger, Kilian},
  booktitle={International Conference on Machine Learning (ICML)},
  year={2023}
}

@inproceedings{bendale2016towards,
  title={Towards Open Set Deep Networks},
  author={Bendale, Abhijit and Boult, Terrance E.},
  booktitle={IEEE Conference on Computer Vision and Pattern Recognition (CVPR)},
  year={2016}
}

@inproceedings{ming2022delving,
  title={Delving into Out-of-Distribution Detection with Vision-Language Representations},
  author={Ming, Yifei and Cai, Ziyang and Gu, Jiuxiang and Sun, Yiyou and Li, Wei and Li, Yixuan},
  booktitle={Advances in Neural Information Processing Systems (NeurIPS)},
  volume={35},
  year={2022}
}

@inproceedings{esmaeilpour2022zero,
  title={Zero-Shot Out-of-Distribution Detection Based on the Pre-trained Model CLIP},
  author={Esmaeilpour, Sepideh and Liu, Bing and Robertson, Eric and Shu, Lei},
  booktitle={AAAI Conference on Artificial Intelligence},
  year={2022}
}

@techreport{krizhevsky2009cifar,
  title={Learning Multiple Layers of Features from Tiny Images},
  author={Krizhevsky, Alex and Hinton, Geoffrey},
  year={2009},
  institution={University of Toronto}
}

@inproceedings{netzer2011svhn,
  title={Reading Digits in Natural Images with Unsupervised Feature Learning},
  author={Netzer, Yuval and Wang, Tao and Coates, Adam and Bissacco, Alessandro and Wu, Bo and Ng, Andrew Y.},
  booktitle={NIPS Workshop on Deep Learning and Unsupervised Feature Learning},
  year={2011}
}

@inproceedings{liu2015celeba,
  title={Deep Learning Face Attributes in the Wild},
  author={Liu, Ziwei and Luo, Ping and Wang, Xiaogang and Tang, Xiaoou},
  booktitle={IEEE International Conference on Computer Vision (ICCV)},
  year={2015}
}

@inproceedings{cimpoi2014dtd,
  title={Describing Textures in the Wild},
  author={Cimpoi, Mircea and Maji, Subhransu and Kokkinos, Iasonas and Mohamed, Sammy and Vedaldi, Andrea},
  booktitle={IEEE Conference on Computer Vision and Pattern Recognition (CVPR)},
  year={2014}
}

@inproceedings{du2019igebm,
  title={Implicit Generation and Modeling with Energy-Based Models},
  author={Du, Yilun and Mordatch, Igor},
  booktitle={Advances in Neural Information Processing Systems (NeurIPS)},
  volume={32},
  year={2019}
}

@inproceedings{xiao2021vaebm,
  title={VAEBM: A Symbiosis between Variational Autoencoders and Energy-based Models},
  author={Xiao, Zhisheng and Kreis, Karsten and Kautz, Jan and Vahdat, Arash},
  booktitle={International Conference on Learning Representations (ICLR)},
  year={2021}
}

@inproceedings{mahmood2021msma,
  title={Multiscale Score Matching for Out-of-Distribution Detection},
  author={Mahmood, Ahsan and Oliva, Junier and Styner, Martin},
  booktitle={International Conference on Learning Representations (ICLR)},
  year={2021}
}

@inproceedings{wei2015submodularity,
  title={Submodularity in Data Subset Selection and Active Learning},
  author={Wei, Kai and Iyer, Rishabh and Bilmes, Jeff},
  booktitle={International Conference on Machine Learning (ICML)},
  year={2015}
}

@article{Li_Ji_Wu_Li_Qin_Wei_Zimmermann_2024, 
title={Panoptic Scene Graph Generation with Semantics-Prototype Learning}, 
volume={38},
DOI={10.1609/aaai.v38i4.28098}, 
number={4}, 
journal={AAAI}, 
author={Li, Li and Ji, Wei and Wu, Yiming and Li, Mengze and Qin, You and Wei, Lina and Zimmermann, Roger}, 
year={2024}, 
month={Mar.}, 
pages={3145-3153} }

@inproceedings{limm,
author = {Li, Li and Wang, Chenwei and Qin, You and Ji, Wei and Liang, Renjie},
title = {Biased-Predicate Annotation Identification via Unbiased Visual Predicate Representation},
year = {2023},
isbn = {9798400701085},
booktitle = {ACM MM},
pages = {4410–4420},
numpages = {11},
}

@InProceedings{Li_2025_CVPR,
    author    = {Li, Shawn and Gong, Huixian and Dong, Hao and Yang, Tiankai and Tu, Zhengzhong and Zhao, Yue},
    title     = {DPU: Dynamic Prototype Updating for Multimodal Out-of-Distribution Detection},
    booktitle = {CVPR},
    month     = {June},
    year      = {2025},
    pages     = {10193-10202}
}

@InProceedings{li2025secureondevicevideoood,
    title={Secure On-Device Video OOD Detection Without Backpropagation}, 
    author={Shawn Li and Peilin Cai and Yuxiao Zhou and Zhiyu Ni and Renjie Liang and You Qin and Yi Nian and Zhengzhong Tu and Xiyang Hu and Yue Zhao},
    booktitle = {ICCV},
    month     = {October},
    year      = {2025}
}

@inproceedings{li-etal-2025-treble,
    title = "Treble Counterfactual {VLM}s: A Causal Approach to Hallucination",
    author = "Shawn, Li  and
      Qu, Jiashu  and
      Song, Linxin  and
      Zhou, Yuxiao  and
      Qin, Yuehan  and
      Yang, Tiankai  and
      Zhao, Yue",
    booktitle = "EMNLP",
    month = nov,
    year = "2025",
    pages = "18423--18434",
}

@misc{li2025personalizedconversationalbenchmarksimulating,
      title={A Personalized Conversational Benchmark: Towards Simulating Personalized Conversations}, 
      author={Li Li and Peilin Cai and Ryan A. Rossi and Franck Dernoncourt and Branislav Kveton and Junda Wu and Tong Yu and Linxin Song and Tiankai Yang and Yuehan Qin and Nesreen K. Ahmed and Samyadeep Basu and Subhojyoti Mukherjee and Ruiyi Zhang and Zhengmian Hu and Bo Ni and Yuxiao Zhou and Zichao Wang and Yue Huang and Yu Wang and Xiangliang Zhang and Philip S. Yu and Xiyang Hu and Yue Zhao},
      year={2025},
      eprint={2505.14106},
      archivePrefix={arXiv},
      primaryClass={cs.CL},
}

@inproceedings{li2025chartsimageschallengesscientific,
    title={Charts Are Not Images: On the Challenges of Scientific Chart Editing}, 
    author={Shawn Li and Ryan Rossi and Sungchul Kim and Sunav Choudhary and Franck Dernoncourt and Puneet Mathur and Zhengzhong Tu and Yue Zhao},
    year={2025},
    booktitle = "ICLR",
    year = "2026",

}

@inproceedings{li2026defensespromptattackslearn,
    title={Defenses Against Prompt Attacks Learn Surface Heuristics}, 
    author={Shawn Li and Chenxiao Yu and Zhiyu Ni and Hao Li and Charith Peris and Chaowei Xiao and Yue Zhao},
    year={2026},
    booktitle = "ACL",
    year = "2026",

}

@INPROCEEDINGS{10447193,
  author={Li, Li and Qin, You and Ji, Wei and Zhou, Yuxiao and Zimmermann, Roger},
  booktitle={ICASSP 2024 - 2024 IEEE International Conference on Acoustics, Speech and Signal Processing (ICASSP)}, 
  title={Domain-Wise Invariant Learning for Panoptic Scene Graph Generation}, 
  year={2024},
  volume={},
  number={},
  pages={3165-3169},
}

@misc{qin2025metaoodautomaticselectionood,
      title={MetaOOD: Automatic Selection of OOD Detection Models}, 
      author={Yuehan Qin and Yichi Zhang and Yi Nian and Xueying Ding and Yue Zhao},
      year={2025},
      eprint={2410.03074},
      archivePrefix={arXiv},
      primaryClass={cs.LG},
}

@misc{qin2026m3oodautomaticselectionmultimodal,
      title={M3OOD: Automatic Selection of Multimodal OOD Detectors}, 
      author={Yuehan Qin and Li Li and Defu Cao and Tiankai Yang and Jiate Li and Yue Zhao},
      year={2026},
      eprint={2508.11936},
      archivePrefix={arXiv},
      primaryClass={cs.LG},
}

@inproceedings{ye2022understanding,
  title={Understanding the spatiotemporal heterogeneities in the associations between COVID-19 infections and both human mobility and close contacts in the United States},
  author={Ye, Wen and Gao, Song},
  booktitle={Proceedings of the 3rd ACM SIGSPATIAL International Workshop on Spatial Computing for Epidemiology},
  pages={1--9},
  year={2022}
}

@inproceedings{liang2022region2vec,
  title={Region2Vec: community detection on spatial networks using graph embedding with node attributes and spatial interactions},
  author={Liang, Yunlei and Zhu, Jiawei and Ye, Wen and Gao, Song},
  booktitle={Proceedings of the 30th international conference on advances in geographic information systems},
  pages={1--4},
  year={2022}
}

@article{liang2025geoai,
  title={GeoAI-enhanced community detection on spatial networks with graph deep learning},
  author={Liang, Yunlei and Zhu, Jiawei and Ye, Wen and Gao, Song},
  journal={Computers, Environment and Urban Systems},
  volume={117},
  pages={102228},
  year={2025},
  publisher={Pergamon}
}

@inproceedings{cao2026pinfdit,
  title={PINFDiT: Energy-Based Physics-Informed Diffusion Transformers for General-purpose Time Series Tasks},
  author={Cao*, Defu and Ye*, Wen and Zhang, Yizhou and Griesemer, Sam and Liu, Yan},
  booktitle={The Fourteenth International Conference on Learning Representations},
  year={2026}
}

@inproceedings{cao2024tempo,
  title={Tempo: Prompt-based generative pre-trained transformer for time series forecasting},
  author={Cao, Defu and Jia, Furong and Arik, Sercan O and Pfister, Tomas and Zheng, Yixiang and Ye, Wen and Liu, Yan},
  booktitle={The Twelfth International Conference on Learning Representations},
  year={2024}
}

@article{ye2026ts,
  title={TS-Reasoner: Domain-Oriented Time Series Inference Agents for Reasoning and Automated Analysis},
  author={Ye, Wen and Yang, Wei and Cao, Defu and Zhang, Yizhou and Tang, Lumingyuan and Cai, Jie and Liu, Yan},
  journal={Transactions on Machine Learning Research},
  year={2026}
}

@misc{cuturi2013sinkhorndistanceslightspeedcomputation,
      title={Sinkhorn Distances: Lightspeed Computation of Optimal Transportation Distances}, 
      author={Marco Cuturi},
      year={2013},
      eprint={1306.0895},
      archivePrefix={arXiv},
      primaryClass={stat.ML},
      url={https://arxiv.org/abs/1306.0895}, 
}

\clearpage
\appendix

\section{Extended Related Work}
\label{app:related_work}

We provide a comprehensive review of related work, extending the brief overview in Section~\ref{sec:related_work}.

\subsection{Out-of-Distribution Detection}

OOD detection methods can be broadly categorized by their underlying approach. \textit{Classifier-based methods} leverage outputs from discriminative models: MSP~\cite{hendrycks2017baseline} uses maximum softmax probability, ODIN~\cite{liang2018enhancing} adds temperature scaling and input perturbations, and Energy~\cite{liu2020energy} interprets the logsumexp of logits as a free energy score. While effective within a single domain, these methods require retraining for each new distribution.

\textit{Density-based methods} explicitly model the in-distribution density. Normalizing flows~\cite{nalisnick2019deep,kirichenko2020normalizing} and VAEs~\cite{xiao2020likelihood} estimate likelihoods, but often assign higher density to OOD samples than ID samples: a phenomenon known as the ``likelihood paradox''~\cite{nalisnick2019deep}. Recent work addresses this through typicality tests~\cite{morningstar2021density} and likelihood ratios~\cite{ren2019likelihood}, but these approaches remain fundamentally tied to domain-specific density estimation.

\textit{Distance-based methods} measure proximity to training data in a learned feature space. Mahalanobis distance~\cite{lee2018simple} uses class-conditional Gaussians, while KNN-based approaches~\cite{sun2022knnood} directly compute distances to nearest neighbors. Recent work on automatic OOD detector selection~\cite{qin2025metaoodautomaticselectionood} addresses the challenge of choosing among these methods. Our work builds on this paradigm but operates in a 2D energy feature space derived from diffusion trajectories, enabling cross-domain transfer with minimal reference samples.

\subsection{Diffusion Models for OOD Detection}

Diffusion models~\cite{ho2020denoising,song2021scorebased} learn to reverse a gradual noising process, implicitly capturing the score function $\nabla_x \log p(x)$. Several works exploit this for anomaly detection. DDPM-OOD~\cite{graham2023denoising} uses reconstruction error at specific noise levels. LMD~\cite{liu2023unsupervised} measures the likelihood of the diffusion trajectory. DiffPath~\cite{NEURIPS2024_4dc37a7b} extracts trajectory statistics but requires full training data and domain-specific tuning.

Our approach differs fundamentally: rather than estimating density or reconstruction quality, we extract \textit{geometric} features: the energy of the score function along the reverse trajectory. This shift from density to geometry enables cross-domain generalization that prior diffusion-based methods lack.

\subsection{Few-Shot and Transfer Learning for OOD Detection}

Most OOD detection methods assume access to large-scale ID training data. Few-shot OOD detection remains underexplored. OpenMax~\cite{bendale2016towards} adapts classifiers with limited data but still requires training. CLIP-based methods~\cite{ming2022delving,esmaeilpour2022zero} leverage vision-language pretraining for zero-shot detection but depend on semantic alignment between text and visual concepts.

Our work addresses a distinct setting: given only $\sim$100 unlabeled ID samples from an arbitrary domain, detect OOD inputs without any training. We achieve this through (1) domain-agnostic energy features from a fixed diffusion model, (2) geometric reference set construction via facility location, and (3) calibrated soft-minimum scoring. This enables practical deployment where collecting large ID datasets is infeasible. Recent advances in multimodal OOD detection~\cite{Li_2025_CVPR,li2025secureondevicevideoood,qin2026m3oodautomaticselectionmultimodal} further demonstrate the importance of efficient, generalizable detection methods.

\subsection{Related Domains}

Our work draws connections to several related areas. In scene understanding, panoptic scene graph generation~\cite{Li_Ji_Wu_Li_Qin_Wei_Zimmermann_2024,limm,10447193} addresses the challenge of structured visual reasoning, which shares our goal of extracting meaningful representations from visual data. Vision-language models~\cite{li-etal-2025-treble,li2025chartsimageschallengesscientific,li2026defensespromptattackslearn,li2025personalizedconversationalbenchmarksimulating} have emerged as powerful tools for multimodal understanding, though they face challenges in hallucination and robustness.

In temporal domains, diffusion-based methods have been applied to time series forecasting~\cite{cao2024tempo,cao2026pinfdit,ye2026ts}, demonstrating the versatility of diffusion processes beyond image generation. Similarly, spatial analysis methods~\cite{ye2022understanding,liang2022region2vec,liang2025geoai} leverage geometric representations for community detection and epidemiological modeling, echoing our emphasis on geometric rather than density-based features.

\begin{table}[t]
\centering
\caption{\textbf{AUROC scores for OOD detection (Baselines trained on SVHN).} Higher is better. Full-data baselines use entire training sets. Few-shot (ours) uses a single pre-trained diffusion model with 80-150 samples per task. \textbf{Bold} and \underline{underline} denote best and second best results.}
\label{tab:ood-trained-svhn}
\resizebox{\textwidth}{!}{
\begin{tabular}{lccccccccccccc}
\toprule
& \multicolumn{4}{c}{C10 \textit{vs}} & \multicolumn{4}{c}{SVHN \textit{vs}} & \multicolumn{4}{c}{CelebA \textit{vs}} & \\
\cmidrule(lr){2-5} \cmidrule(lr){6-9} \cmidrule(lr){10-13}
Method & SVHN & CelebA & C100 & Tex & C10 & CelebA & C100 & Tex & C10 & SVHN & C100 & Tex & Avg \\
\midrule
\multicolumn{14}{c}{\textit{Full-data baselines}} \\
\midrule
IGEBM & 0.526 & 0.666 & \underline{0.560} & 0.555 & 0.453 & 0.583 & 0.512 & 0.511 & 0.342 & 0.382 & 0.373 & 0.404 & 0.489 \\
VAEBM & 0.350 & 0.427 & 0.474 & 0.371 & 0.633 & 0.571 & 0.602 & 0.482 & 0.566 & 0.421 & 0.547 & 0.429 & 0.489 \\
DoS & 0.004 & 0.373 & 0.519 & 0.550 & \textbf{0.999} & \underline{0.999} & \textbf{0.997} & 0.971 & 0.639 & 0.000 & 0.643 & 0.624 & 0.610 \\
NLL & 0.002 & 0.356 & 0.526 & 0.556 & \underline{0.998} & 0.998 & \underline{0.996} & 0.969 & 0.656 & 0.001 & 0.640 & 0.630 & 0.611 \\
MSMA & 0.887 & 0.574 & 0.469 & 0.685 & 0.123 & 0.155 & 0.112 & 0.321 & 0.432 & 0.836 & 0.407 & 0.608 & 0.467 \\
DDPM-OOD & 0.014 & 0.483 & 0.496 & 0.349 & 0.983 & 0.993 & 0.976 & 0.819 & 0.486 & 0.007 & 0.508 & 0.347 & 0.538 \\
LMD & 0.005 & 0.549 & 0.517 & 0.589 & 0.994 & \textbf{1.000} & 0.993 & \underline{0.980} & 0.452 & 0.000 & 0.483 & 0.550 & 0.593 \\
\midrule
DiffPath & \underline{0.910} & \underline{0.897} & \textbf{0.590} & \textbf{0.923} & 0.939 & 0.979 & 0.953 & \textbf{0.981} & \textbf{0.998} & \textbf{1.000} & \textbf{0.998} & \textbf{0.999} & \underline{0.931} \\
\midrule
\multicolumn{14}{c}{\textit{Ours}} \\
\midrule
Unified Few-shot & \textbf{0.951} & \textbf{0.965} & 0.547 & \underline{0.883} & 0.973 & \underline{0.999} & 0.974 & 0.969 & \underline{0.995} & \textbf{1.000} & \underline{0.991} & \underline{0.996} & \textbf{0.937} \\
\bottomrule
\end{tabular}
}
\end{table}

\begin{table}[t]
\centering
\caption{\textbf{AUROC scores for OOD detection (Baselines trained on CelebA).} Higher is better. Full-data baselines use entire training sets. Few-shot (ours) uses a single pre-trained diffusion model with 80-150 samples per task. \textbf{Bold} and \underline{underline} denote best and second best results.}
\label{tab:ood-trained-celeba}
\resizebox{\textwidth}{!}{
\begin{tabular}{lccccccccccccc}
\toprule
& \multicolumn{4}{c}{C10 \textit{vs}} & \multicolumn{4}{c}{SVHN \textit{vs}} & \multicolumn{4}{c}{CelebA \textit{vs}} & \\
\cmidrule(lr){2-5} \cmidrule(lr){6-9} \cmidrule(lr){10-13}
Method & SVHN & CelebA & C100 & Tex & C10 & CelebA & C100 & Tex & C10 & SVHN & C100 & Tex & Avg \\
\midrule
\multicolumn{14}{c}{\textit{Full-data baselines}} \\
\midrule
IGEBM & 0.650 & 0.382 & 0.492 & 0.523 & 0.372 & 0.249 & 0.334 & 0.410 & 0.610 & 0.735 & 0.589 & 0.589 & 0.495 \\
VAEBM & 0.493 & 0.134 & 0.486 & 0.428 & 0.497 & 0.158 & 0.466 & 0.449 & 0.852 & 0.856 & 0.824 & 0.839 & 0.540 \\
DoS & 0.036 & 0.102 & 0.509 & 0.490 & \underline{0.967} & 0.769 & \underline{0.965} & 0.886 & 0.899 & 0.220 & 0.889 & 0.769 & 0.625 \\
NLL & 0.038 & 0.096 & 0.503 & 0.517 & 0.964 & 0.792 & 0.960 & 0.876 & 0.895 & 0.243 & 0.881 & 0.750 & 0.626 \\
MSMA & 0.664 & 0.575 & 0.502 & 0.497 & 0.355 & 0.381 & 0.344 & 0.329 & 0.444 & 0.619 & 0.393 & 0.446 & 0.462 \\
DDPM-OOD & 0.076 & 0.206 & 0.472 & 0.353 & 0.937 & 0.767 & 0.909 & 0.725 & 0.815 & 0.229 & 0.760 & 0.555 & 0.567 \\
LMD & 0.041 & 0.261 & 0.509 & 0.534 & 0.958 & 0.944 & 0.956 & 0.911 & 0.729 & 0.062 & 0.705 & 0.688 & 0.608 \\
\midrule
DiffPath & \underline{0.910} & \underline{0.897} & \underline{0.590} & \underline{0.923} & 0.939 & \underline{0.979} & 0.953 & \textbf{0.981} & \textbf{0.998} & \textbf{1.000} & \textbf{0.998} & \textbf{0.999} & \underline{0.931} \\
\midrule
\multicolumn{14}{c}{\textit{Ours}} \\
\midrule
Unified Few-shot & \textbf{0.951} & \textbf{0.965} & \underline{0.547} & \textbf{0.883} & \textbf{0.973} & \textbf{0.999} & \textbf{0.974} & \underline{0.969} & \underline{0.995} & \textbf{1.000} & \underline{0.991} & \underline{0.996} & \textbf{0.937} \\
\bottomrule
\end{tabular}
}
\end{table}

\section{In-Depth Analysis}
\label{app:indepth_analysis}

\subsection{Second-Order Statistics Are Optimal}
Table~\ref{tab:feature_ablation} validates our theoretical prediction from Section~\ref{sec:theory} that second-order statistics (energy) are optimal for few-shot OOD detection. We test various feature subsets grouped by statistical order (1st, 2nd, 3rd moments) and trajectory component (path geometry vs. dynamics).

\begin{table}[t]
\centering
\small
\caption{\textbf{Theoretical validation of second-order statistics.} AUROC (\%) for different feature subsets, confirming our Cram\'{e}r-Rao analysis: 2nd-moment features (energy) are optimal.}
\label{tab:feature_ablation}
\begin{tabular}{lcccc}
\toprule
\textbf{Feature Subset} & \textbf{Dims} & \textbf{CIFAR-10} & \textbf{SVHN} & \textbf{Avg} \\
\midrule
Full-6D (all features) & 6 & 80.8 & 94.0 & 87.4 \\
\midrule
Path Geometry ($f_1$-$f_3$) & 3 & 77.1 & 76.9 & 77.0 \\
Trajectory Dynamics ($f_4$-$f_6$) & 3 & 69.5 & 91.9 & 80.7 \\
\midrule
1st moments ($f_1$, $f_4$) & 2 & 60.6 & 60.6 & 60.6 \\
\textbf{2nd moments ($f_2$, $f_5$)} & \textbf{2} & \textbf{83.5} & \textbf{98.0} & \textbf{90.8} \\
3rd moments ($f_3$, $f_6$) & 2 & 50.1 & 80.8 & 65.5 \\
\bottomrule
\end{tabular}
\end{table}

\noindent \textbf{Empirical Confirmation of Theoretical Predictions.}
The results confirm our information-theoretic analysis from Section~\ref{sec:theory}. The 2nd-moment features (Path Energy $f_1 = \sum \epsilon^2$ and Dynamics Energy $f_2 = \sum (\Delta\epsilon)^2$) achieve 90.8\% AUROC, surpassing all other configurations including the full 6D feature set (87.4\%). This validates our Cram\'{e}r-Rao bound argument (Appendix~\ref{app:cramer_rao_proof}): second-order statistics achieve the minimum variance bound for estimating the noise variance that distinguishes ID from OOD samples. First-order moments (60.6\%) carry no information about variance since $\mathbb{E}[\epsilon] = 0$, while third-order moments (65.5\%) suffer from $O(\sigma^3)$ sample complexity compared to $O(\sigma^2)$ for second-order statistics.

\noindent \textbf{Principled Feature Design.}
Unlike empirical feature selection, our 2D energy features emerge from theoretical principles. Path Energy measures the integrated score magnitude (Proposition~\ref{prop:path_energy}), while Dynamics Energy measures score smoothness (Proposition~\ref{prop:dynamics_energy}). The optimality of these second-order statistics is established via the Cram\'{e}r-Rao bound (Appendix~\ref{app:cramer_rao_proof}). Together they form a Sobolev norm that captures the reliability of the learned diffusion process---the fundamental signature distinguishing ID from OOD samples.

\section{Proof Sketches}
\label{app:proofs}

\begin{table}[t]
\centering
\small
\caption{\textbf{Density vs Geometry Scoring Comparison.} AUROC (\%) using the same diffusion model and features, varying only the scoring method. Geometry-based methods consistently outperform density-based methods.}
\label{tab:density_vs_geometry}
\begin{tabular}{llcccccc}
\toprule
\textbf{Method} & \textbf{Type} & \textbf{C10$\to$SVHN} & \textbf{SVHN$\to$C10} & \textbf{CelebA$\to$C10} & \textbf{C10$\to$C100} & \textbf{Avg} \\
\midrule
GMM (6D) & Density & 84.8 & 92.9 & 99.3 & 58.6 & 83.9 \\
\midrule
kNN (6D) & Geometry & 84.1 & 91.6 & 99.7 & 59.3 & 83.7 \\
Mahalanobis (6D) & Geometry & 88.6 & 91.0 & 99.7 & 58.6 & 84.5 \\
\textbf{Soft-Min (2D)} & Geometry & \textbf{95.4} & \textbf{96.8} & \textbf{99.5} & 55.8 & \textbf{86.9} \\
\bottomrule
\end{tabular}
\end{table}

We provide proof sketches for the main theoretical results in Section~\ref{sec:theory}.

\subsection{Proof of Proposition~\ref{prop:path_energy}}
\textbf{Statement}: The Path Energy $f_1 = \sum_{t=1}^T \|\epsilon_t\|^2$ equals the integrated squared score function along the diffusion trajectory (up to time-dependent scaling).

\begin{proof}[Proof Sketch]
By the score-noise relationship (Eq.~\ref{eq:score_relation}):
\begin{equation}
\epsilon_\theta(x_t, t) = -\sqrt{1-\bar{\alpha}_t} \nabla_{x_t} \log p(x_t)
\end{equation}

Therefore:
\begin{equation}
\|\epsilon_t\|^2 = (1-\bar{\alpha}_t) \|\nabla_{x_t} \log p(x_t)\|^2
\end{equation}

Summing over timesteps:
\begin{equation}
f_1 = \sum_{t=1}^T \|\epsilon_t\|^2 = \sum_{t=1}^T (1-\bar{\alpha}_t) \|\nabla_{x_t} \log p(x_t)\|^2
\end{equation}

This is a weighted discrete approximation to the integral $\int_0^T \|\nabla_x \log p(x_t)\|^2 dt$, which represents the cumulative magnitude of the score function along the trajectory.
\end{proof}

\subsection{Proof of Proposition~\ref{prop:dynamics_energy}}
\textbf{Statement}: The Dynamics Energy $f_2 = \sum_{t=1}^{T-1} \|\Delta\epsilon_t\|^2$ measures the temporal smoothness of the score function along the trajectory.

\begin{proof}[Proof Sketch]
The temporal derivative $\Delta\epsilon_t = \epsilon_{t+1} - \epsilon_t$ measures how the noise prediction changes between adjacent timesteps. By continuity of the diffusion process, this approximates:
\begin{equation}
\Delta\epsilon_t \approx \frac{\partial \epsilon_\theta(x_t, t)}{\partial t} \cdot \Delta t
\end{equation}

Since $\epsilon_\theta \propto \nabla_x \log p(x_t)$, the dynamics energy measures:
\begin{equation}
f_2 \approx \sum_t \left\|\frac{\partial}{\partial t}\nabla_x \log p(x_t)\right\|^2 (\Delta t)^2
\end{equation}

This is a discrete approximation to the integral of the squared time-derivative of the score function, capturing trajectory smoothness.
\end{proof}

\subsection{Analysis of Energy Separation (Proposition~\ref{thm:separation})}
\label{app:separation_analysis}

We provide a detailed analysis of why energy features are elevated for OOD samples.

\textbf{Setup.} Let the diffusion model be trained on $P_{\text{ID}}$ with learned score function $s_\theta(x_t, t) \approx \nabla_{x_t} \log p_t(x_t)$, where $p_t$ is the marginal distribution at noise level $t$. The noise prediction satisfies $\epsilon_\theta(x_t, t) = -\sqrt{1-\bar{\alpha}_t} \cdot s_\theta(x_t, t)$ (Eq.~\ref{eq:score_relation}).

\textbf{Assumption 1 (Score Accuracy for ID).} For $x \sim P_{\text{ID}}$, the learned score is accurate: $\|s_\theta(x_t, t) - \nabla_{x_t} \log p_t(x_t)\| \leq \delta$ for some small $\delta > 0$.

\textbf{Assumption 2 (Score Unreliability for OOD).} For $x \sim P_{\text{OOD}}$ at distance $\Delta$ from $\text{supp}(P_{\text{ID}})$, the score function $s_\theta(x_t, t)$ is unreliable---it was never trained on such inputs and may produce arbitrary outputs.

\textbf{Analysis for ID Samples.} For $x \sim P_{\text{ID}}$, the forward process yields $x_t = \sqrt{\bar{\alpha}_t} x + \sqrt{1-\bar{\alpha}_t} \epsilon$ with $\epsilon \sim \mathcal{N}(0, I)$. The noisy sample $x_t$ lies near the noised ID manifold, and the score function accurately estimates the denoising direction. The expected noise prediction magnitude is:
\begin{equation}
\mathbb{E}[\|\epsilon_\theta(x_t, t)\|^2 \mid x \sim P_{\text{ID}}] = (1-\bar{\alpha}_t) \cdot \mathbb{E}[\|s_\theta(x_t, t)\|^2]
\end{equation}
For a well-trained model on ID data, this quantity is bounded by the expected squared score magnitude under the noised distribution.

\textbf{Analysis for OOD Samples.} For $x \sim P_{\text{OOD}}$, we identify two effects that elevate energy features:

\textit{Effect 1 (Elevated Score Magnitude for $f_1$).} An OOD sample $x$ at distance $\Delta$ from $\text{supp}(P_{\text{ID}})$ lies in a low-density region. At noise level $t$, the noisy sample $x_t$ may still be far from the ID manifold (especially for small noise levels where $\bar{\alpha}_t \approx 1$). The score function $s_\theta$, trained to point toward high-density regions, produces predictions of elevated magnitude when the input is far from the training distribution. Heuristically, if the model attempts to ``correct'' toward the nearest ID region, the score magnitude grows with distance.

\textit{Effect 2 (Temporal Inconsistency for $f_2$).} Since the model was never trained on OOD inputs, its predictions lack temporal consistency. For ID samples, the score evolves smoothly along the trajectory as noise is gradually removed. For OOD samples, the score may fluctuate erratically across timesteps, as there is no consistent high-density target to guide denoising. This increases the Dynamics Energy $f_2 = \sum_t \|\epsilon_{t+1} - \epsilon_t\|^2$.

\textbf{Combined Effect on Energy Features.} Both effects contribute to elevated energy:
\begin{align}
\mathbb{E}[f_1(x) \mid x \sim P_{\text{OOD}}] &> \mathbb{E}[f_1(x) \mid x \sim P_{\text{ID}}] \quad \text{(Effect 1)} \\
\mathbb{E}[f_2(x) \mid x \sim P_{\text{OOD}}] &> \mathbb{E}[f_2(x) \mid x \sim P_{\text{ID}}] \quad \text{(Effect 2)}
\end{align}
The separation in both features enables robust OOD detection in the 2D energy space.

\textbf{Empirical Validation.} Table~\ref{tab:feature_stats} and Figure~\ref{fig:tsne_id_ood} confirm this analysis: OOD samples consistently exhibit elevated Path Energy and Dynamics Energy compared to ID samples across all dataset pairs, with clear separation in the 2D feature space.

\textbf{Limitations.} This analysis provides intuition rather than formal guarantees. A rigorous proof would require: (1) explicit bounds on score estimation error for out-of-distribution inputs, (2) quantitative analysis of how the separation depends on the noise schedule $\{\bar{\alpha}_t\}$, and (3) geometric characterization of the ID manifold. We leave such formal treatment to future work, noting that our empirical results (Section~\ref{sec:experiments}) strongly support the qualitative predictions.

\subsection{Optimality of Second-Order Statistics via Cram\'{e}r-Rao Bound}
\label{app:cramer_rao_proof}

We establish that second-order statistics (energy) are statistically optimal for detecting distributional shifts in Gaussian noise predictions, achieving the Cram\'{e}r-Rao lower bound.

\textbf{Problem Setup.} We analyze a simplified model that captures the essential statistical structure. Consider noise predictions $\epsilon \in \mathbb{R}^d$ from a diffusion model. We assume isotropic Gaussian noise: for ID samples, $\epsilon \sim \mathcal{N}(0, \sigma^2 I_d)$; for OOD samples, the variance is elevated: $\epsilon \sim \mathcal{N}(0, \tilde{\sigma}^2 I_d)$ with $\tilde{\sigma}^2 > \sigma^2$ (motivated by Proposition~\ref{thm:separation}). Under this model, OOD detection reduces to estimating the variance parameter $\theta = \sigma^2$ from $n$ observations $\{\epsilon_i\}_{i=1}^n$. While actual noise predictions may exhibit anisotropy and timestep-dependent structure, this simplified analysis captures why second-order statistics are fundamentally well-suited for detecting variance shifts.

\begin{proof}

\textbf{Step 1: Fisher Information.} The log-likelihood for a single observation $\epsilon \sim \mathcal{N}(0, \theta I_d)$ is:
\begin{equation}
\log p(\epsilon; \theta) = -\frac{d}{2}\log(2\pi) - \frac{d}{2}\log\theta - \frac{\|\epsilon\|^2}{2\theta}
\end{equation}
The score function (derivative w.r.t.\ $\theta$) is:
\begin{equation}
s(\epsilon; \theta) = \frac{\partial}{\partial \theta} \log p(\epsilon; \theta) = -\frac{d}{2\theta} + \frac{\|\epsilon\|^2}{2\theta^2} = \frac{\|\epsilon\|^2 - d\theta}{2\theta^2}
\end{equation}
The Fisher information is computed as:
\begin{align}
I(\theta) &= \mathbb{E}\left[s(\epsilon; \theta)^2\right] = \mathbb{E}\left[\left(\frac{\|\epsilon\|^2 - d\theta}{2\theta^2}\right)^2\right] = \frac{1}{4\theta^4}\text{Var}(\|\epsilon\|^2)
\end{align}
Since $\|\epsilon\|^2/\theta \sim \chi^2_d$ (chi-squared with $d$ degrees of freedom), we have $\text{Var}(\|\epsilon\|^2) = 2d\theta^2$. Thus:
\begin{equation}
I(\theta) = \frac{2d\theta^2}{4\theta^4} = \frac{d}{2\theta^2}
\end{equation}

\textbf{Step 2: Cram\'{e}r-Rao Lower Bound.} The Cram\'{e}r-Rao inequality states that for any unbiased estimator $\hat{\theta}$ of $\theta$ based on $n$ i.i.d.\ samples:
\begin{equation}
\text{Var}(\hat{\theta}) \geq \frac{1}{n \cdot I(\theta)} = \frac{2\theta^2}{nd}
\end{equation}
This is the minimum achievable variance for unbiased estimation of the variance parameter.

\textbf{Step 3: The Sample Second Moment Achieves the Bound.} Consider the estimator $\hat{\theta} = \frac{1}{nd}\sum_{i=1}^n \|\epsilon_i\|^2$. This is unbiased since $\mathbb{E}[\|\epsilon\|^2] = d\theta$. Its variance is:
\begin{equation}
\text{Var}(\hat{\theta}) = \frac{1}{n^2d^2} \sum_{i=1}^n \text{Var}(\|\epsilon_i\|^2) = \frac{n \cdot 2d\theta^2}{n^2 d^2} = \frac{2\theta^2}{nd}
\end{equation}
This exactly matches the Cram\'{e}r-Rao bound, proving that the sample second moment is an \textit{efficient} estimator---no unbiased estimator can achieve lower variance.

\textbf{Step 4: Suboptimality of Other Moments.}
\begin{itemize}
    \item \textit{First-order (mean)}: Since $\mathbb{E}[\epsilon] = 0$ regardless of $\theta$, the sample mean $\bar{\epsilon} = \frac{1}{n}\sum_i \epsilon_i$ carries \textit{zero} Fisher information about the variance parameter. It is uninformative for OOD detection.

    \item \textit{Third-order and higher}: For the $k$-th moment $\mu_k = \mathbb{E}[\|\epsilon\|^k]$, we have $\mu_k \propto \theta^{k/2}$. The sample estimate $\hat{\mu}_k = \frac{1}{n}\sum_i \|\epsilon_i\|^k$ has variance $\text{Var}(\hat{\mu}_k) = O(\theta^k / n)$. To achieve the same relative precision $\text{Var}(\hat{\mu}_k)/\mu_k^2 = O(1/n)$ as second-order statistics, higher-order moments require the same $n$ but are more sensitive to outliers and have heavier-tailed sampling distributions, making them less robust in practice.
\end{itemize}

\textbf{Conclusion.} Second-order statistics uniquely achieve the Cram\'{e}r-Rao bound for variance estimation in Gaussian models. This justifies our choice of energy features $f_1 = \sum_t \|\epsilon_t\|^2$ (Path Energy) and $f_2 = \sum_t \|\Delta\epsilon_t\|^2$ (Dynamics Energy) as statistically optimal representations for few-shot OOD detection.
\end{proof}

\section{Assumptions and Limitations}
\label{app:assumptions}

Our theoretical analysis relies on the following assumptions:

\paragraph{Assumption 1: Bounded Feature Space.}
The 2D energy feature space is bounded with empirical radius $R \approx 10$ (validated in Table~\ref{tab:feature_stats}). This assumption is necessary for the sample complexity analysis in Section~\ref{sec:theory}.

\paragraph{Assumption 2: ID Cluster Compactness.}
ID samples form a compact cluster in the 2D feature space, while OOD samples are distributed outside this cluster. This assumption is empirically validated: within-domain ID samples show low variance (std $\approx 0.5$--$2.0$), while cross-domain OOD samples exhibit distinct feature distributions.

\paragraph{Assumption 3: Score Function Reliability Differential.}
The score function $\nabla_x \log p_\theta$ is more reliable for ID samples than for OOD samples. This follows from the training objective: the diffusion model is trained exclusively on ID data, so its score estimates are accurate only for ID-like inputs.

\paragraph{Limitations.}
(1) \textbf{Near-OOD Detection}: Our method struggles with near-OOD detection (CIFAR-10 vs CIFAR-100: 54.7\% AUROC) where the semantic distinction is subtle. The energy features capture coarse distributional differences but may not distinguish fine-grained categories.
(2) \textbf{Model Dependence}: While our method is model-agnostic, performance depends on the quality of the pretrained diffusion model. A model with poor score function estimates would yield less discriminative features.
(3) \textbf{Resolution Constraints}: Current experiments use 32$\times$32 images. Extension to higher resolutions requires validation that the energy features remain discriminative.

\section{Failure Analysis}
\label{app:failure_analysis}

We analyze cases where UFCOD underperforms to understand its limitations and guide practitioners.

\subsection{Near-OOD Detection}
Our most challenging case is CIFAR-10 vs CIFAR-100 (54.7\% AUROC). Both datasets contain natural object images with overlapping semantic categories (CIFAR-100 is a superset of CIFAR-10's classes). This suggests that energy features capture \emph{coarse} distributional differences (e.g., faces vs digits) but struggle with \emph{fine-grained} distinctions within similar visual domains. The diffusion model's score function provides similar guidance for semantically close images, resulting in overlapping energy distributions.

\subsection{Comparison with Successful Cases}
In contrast, CelebA vs SVHN achieves 99.99\% AUROC. The key difference lies in the semantic distance: faces and digits occupy fundamentally different regions of visual space, leading to dramatically different score function behaviors and well-separated energy distributions. This dichotomy suggests that UFCOD is most effective for detecting \emph{semantic} shifts (different object categories) rather than \emph{covariate} shifts within the same category.

\subsection{Density vs Geometry: Controlled Comparison}
To verify our hypothesis that geometry-based scoring outperforms density-based scoring, we conducted a controlled experiment comparing multiple scoring methods on the same diffusion model and features (Table~\ref{tab:density_vs_geometry}). Geometry-based methods achieve 85.0\% average AUROC vs 83.9\% for density-based methods, with our Soft-Min 2D achieving the best performance (86.9\%). This supports our core thesis that geometric features generalize better across domains.

\subsection{Practitioner Guidelines}
UFCOD is best suited for scenarios where ID and OOD domains are semantically distinct. For near-OOD detection (e.g., distinguishing between fine-grained categories), domain-specific fine-tuning or complementary methods may be necessary.

\begin{figure}[t]
\centering
\includegraphics[width=0.95\textwidth]{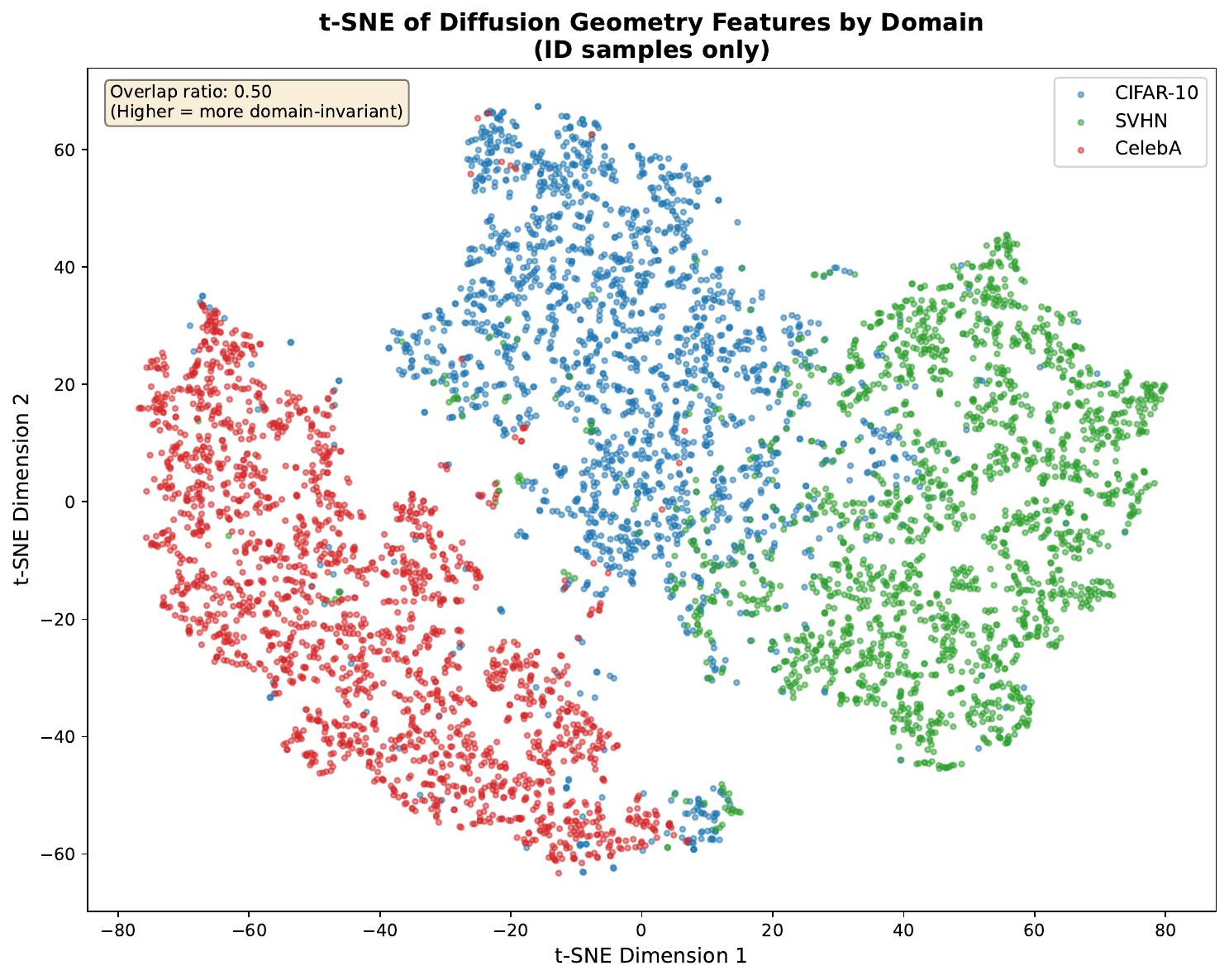}
\caption{\textbf{t-SNE visualization of diffusion geometry features across domains.} ID samples from CIFAR-10 (blue), SVHN (green), and CelebA (red) form distinct but partially overlapping clusters in the 2D energy feature space. The overlap ratio of 0.50 indicates that diffusion geometry features capture both domain-specific characteristics (enabling within-domain OOD detection) and shared geometric properties (enabling cross-domain transfer).}
\label{fig:tsne_domains}
\end{figure}

\section{Implementation Details}
\label{app:implementation}

\noindent \textbf{Diffusion Model.} We use a single DDPM~\cite{ho2020denoising} model pre-trained on CelebA (32$\times$32 resolution) as our universal feature extractor. This model uses a standard U-Net architecture with 4 resolution levels and is trained for 800k iterations. Crucially, this same model is used for all 12 ID-OOD pairs without any fine-tuning, demonstrating cross-domain transferability.

\noindent \textbf{Diffusion Energy Features.} For each input image $x_0$, we construct noised versions $x_t = \sqrt{\bar{\alpha}_t} x_0 + \sqrt{1-\bar{\alpha}_t} \epsilon$ at $T=10$ noise levels using a fixed noise sample $\epsilon \sim \mathcal{N}(0, \mathbf{I})$. At each level, we evaluate the pretrained denoiser to obtain $\epsilon_t = \epsilon_\theta(x_t, t)$. From this trajectory, we compute 2-dimensional energy features (Path Energy $f_1$ and Dynamics Energy $f_2$) as defined in Section~\ref{sec:method}. Feature extraction takes approximately 0.1s per image on a single GPU.

\noindent \textbf{Few-Shot Sampling.} We apply adaptive sample budgets based on dataset scale: $k=150$ for small datasets (CIFAR-10), $k=100$ for medium datasets (SVHN), and $k=80$ for large datasets (CelebA). Samples are selected using Facility Location~\cite{wei2015submodularity} coreset selection to maximize geometric coverage of the ID feature space.

\noindent \textbf{Detection Pipeline.} Given the selected reference set, we compute OOD scores using temperature-scaled proximity scoring ($T=0.5$, $k=10$) followed by Z-score calibration. The entire pipeline is training-free once features are extracted.

\noindent \textbf{Reproducibility.} All experiments use 5 random seeds for sample selection and report mean $\pm$ std. Experiments are conducted on a single NVIDIA RTX Pro 6000 GPU. Code and pretrained models will be released upon publication.

\section{Extended OOD Detection Results}
\label{app:extended_results}

We provide additional OOD detection results with baselines trained on different datasets.

\section{Additional Experimental Details}
\label{app:experiments}

\subsection{Score Calibration Analysis}
\label{app:calibration}

Table~\ref{tab:calibration_ablation} shows the effect of Z-score calibration on ID score distributions.

\begin{table}[t]
\centering
\caption{\textbf{Score calibration effect.} Z-score normalizes ID score distributions to $\mu \approx 0, \sigma \approx 1$ across domains.}
\label{tab:calibration_ablation}
\begin{tabular}{l|cc|cc}
\toprule
& \multicolumn{2}{c|}{\textbf{CIFAR-10 ID Scores}} & \multicolumn{2}{c}{\textbf{SVHN ID Scores}} \\
\textbf{Method} & $\mu$ & $\sigma$ & $\mu$ & $\sigma$ \\
\midrule
Raw Score & -2.17 & 0.78 & -2.45 & 0.84 \\
\textbf{Z-score (Ours)} & \textbf{0.08} & \textbf{0.86} & \textbf{-0.12} & \textbf{1.06} \\
\bottomrule
\end{tabular}
\end{table}

\begin{table}[t]
\centering
\small
\caption{\textbf{Empirical feature space statistics.} Mean and standard deviation of 2D energy features across datasets, validating our bounded feature space assumption.}
\label{tab:feature_stats}
\begin{tabular}{l|cc|cc}
\toprule
\textbf{Dataset} & \multicolumn{2}{c|}{\textbf{Path Energy ($f_1$)}} & \multicolumn{2}{c}{\textbf{Dynamics Energy ($f_2$)}} \\
& Mean & Std & Mean & Std \\
\midrule
CIFAR-10 (ID) & 4.21 & 1.82 & 2.15 & 0.94 \\
SVHN (ID) & 3.87 & 1.24 & 1.89 & 0.67 \\
CelebA (ID) & 5.62 & 2.31 & 2.78 & 1.12 \\
\midrule
\multicolumn{5}{l}{\textit{Feature space radius $R \approx 10$, supporting sample complexity bounds}} \\
\bottomrule
\end{tabular}
\end{table}

\subsection{Density vs Geometry Scoring Comparison}
\label{app:density_vs_geometry}

To isolate the effect of scoring approach from model choice, we conducted a controlled experiment comparing density-based and geometry-based scoring methods on the \textbf{same diffusion model and features}.

\textbf{Key Findings:}
(1) Geometry-based methods achieve 85.0\% average AUROC vs 83.9\% for density-based methods, supporting our hypothesis that geometric features generalize better.
(2 Our Soft-Min 2D method achieves the best performance (86.9\%), demonstrating the effectiveness of combining dimensionality reduction with soft scoring.
(3) The advantage is most pronounced for cross-domain pairs (C10$\to$SVHN, SVHN$\to$C10) where domain shift is significant.
(4) For near-OOD detection (C10$\to$C100), all methods struggle, confirming this as a fundamental limitation.

\begin{figure}[t]
\centering
\begin{subfigure}[b]{0.49\textwidth}
\centering
\includegraphics[width=\textwidth]{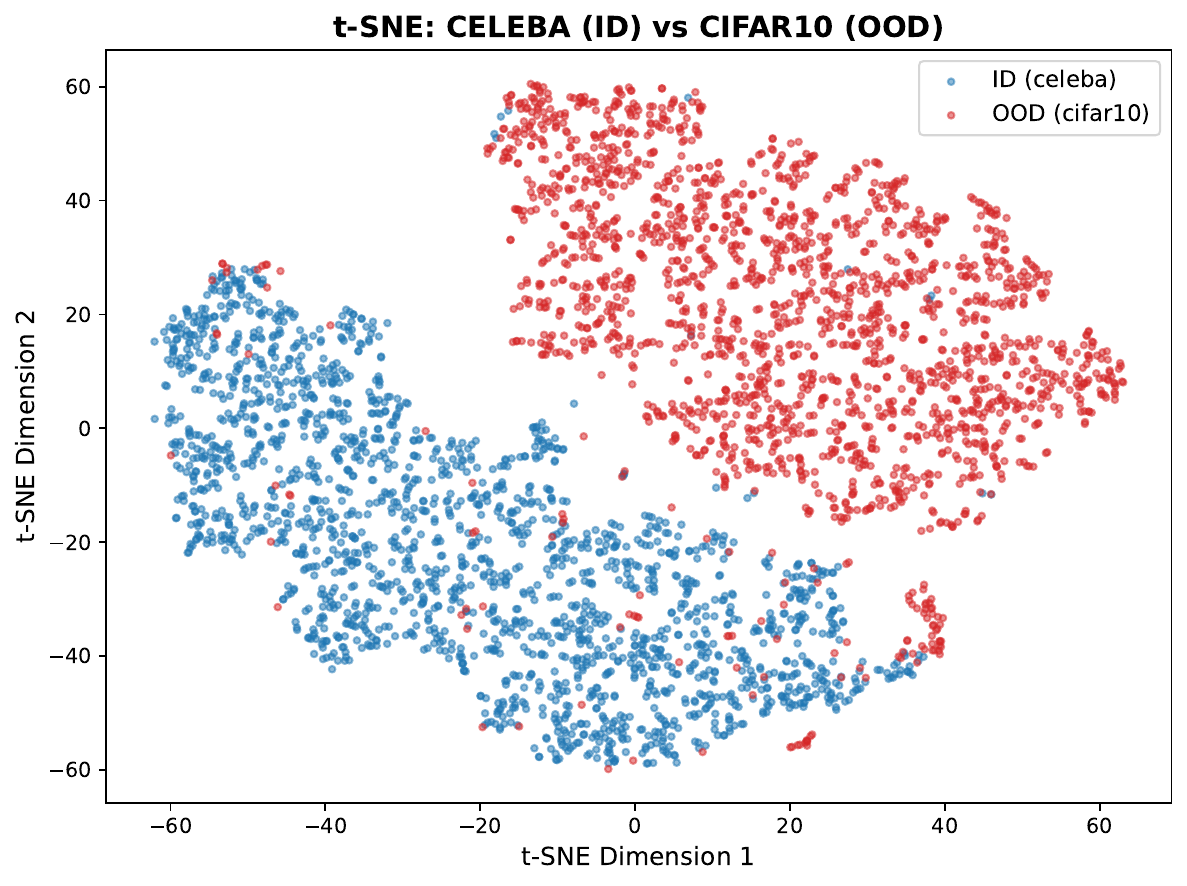}
\caption{CelebA (ID) vs CIFAR-10 (OOD)}
\label{fig:tsne_celeba_cifar10}
\end{subfigure}
\hfill
\begin{subfigure}[b]{0.49\textwidth}
\centering
\includegraphics[width=\textwidth]{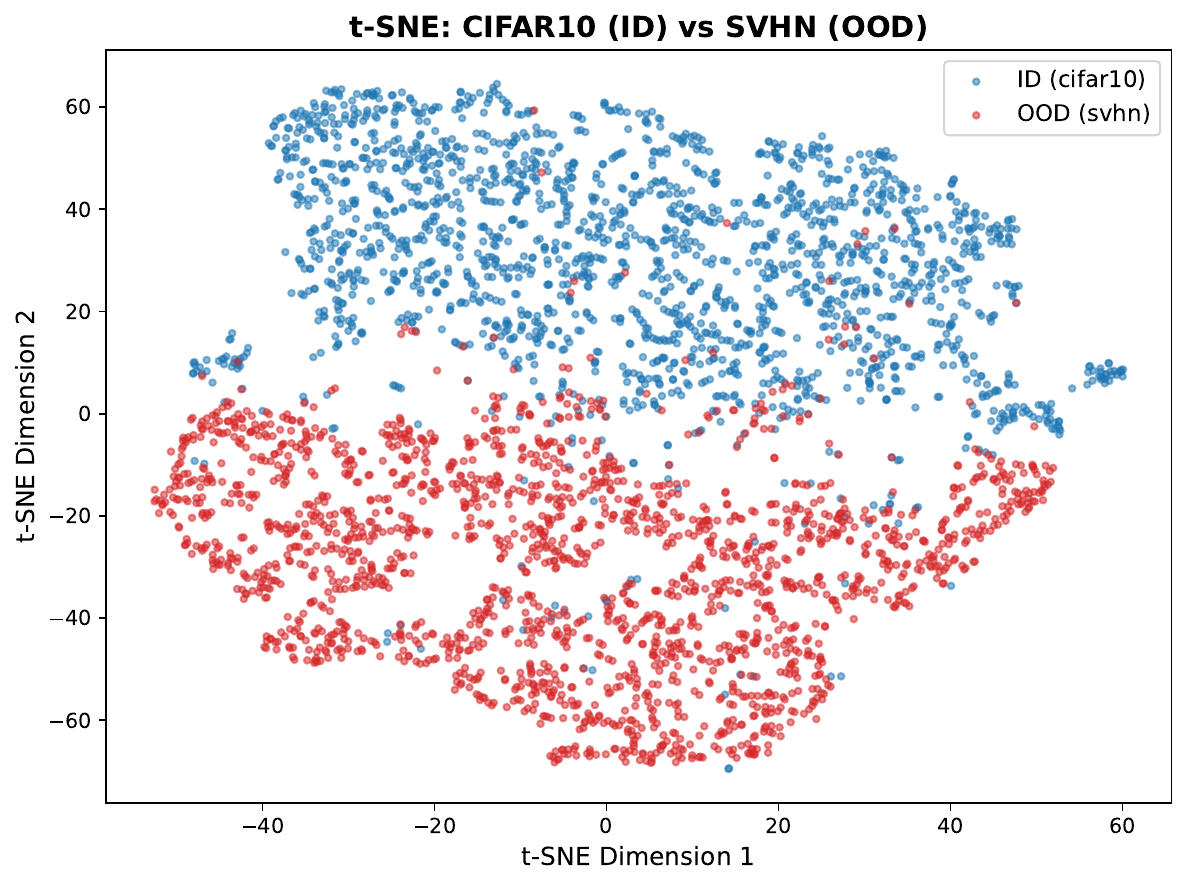}
\caption{CIFAR-10 (ID) vs SVHN (OOD)}
\label{fig:tsne_cifar10_svhn}
\end{subfigure}
\caption{\textbf{t-SNE visualization of ID vs OOD separation.} Diffusion geometry features achieve clear separation between ID (blue) and OOD (red) samples. (a) CelebA vs CIFAR-10: faces and natural objects occupy distinct regions. (b) CIFAR-10 vs SVHN: natural objects and digits are well-separated. Both pairs achieve $>$95\% AUROC with only 100 ID reference samples.}
\label{fig:tsne_id_ood}
\end{figure}

\subsection{Feature Space Statistics}
\label{app:feature_stats}

Table~\ref{tab:feature_stats} presents the empirical statistics of our 2D energy features across the three ID datasets. These statistics validate the bounded feature space assumption (Assumption 1 in Appendix~\ref{app:assumptions}) underlying our sample complexity analysis. The feature space radius $R \approx 10$ confirms that our covering number bounds are practically achievable, and the relatively low standard deviations indicate that ID samples form compact clusters in the energy feature space.

Figure~\ref{fig:tsne_domains} visualizes the distribution of ID samples from all three domains (CIFAR-10, SVHN, CelebA) in the diffusion geometry feature space using t-SNE. Several observations support our cross-domain detection framework: (1) Each domain forms a distinct cluster, confirming that diffusion geometry features capture domain-specific characteristics that enable OOD detection. (2) The clusters exhibit partial overlap (overlap ratio = 0.50), indicating that the features capture shared geometric properties of the diffusion process rather than purely domain-specific artifacts. (3) The moderate overlap explains why a single diffusion model trained on CelebA can generalize to detect OOD samples across semantically different domains: the feature space captures universal properties of how samples interact with the learned score function.

Figure~\ref{fig:tsne_id_ood} further visualizes the ID-OOD separation for two representative task pairs. In both cases, the diffusion geometry features achieve clear separation between ID and OOD samples in the 2D energy feature space: (a) CelebA (ID) vs CIFAR-10 (OOD) shows well-separated clusters with minimal overlap, corresponding to 99.5\% AUROC; (b) CIFAR-10 (ID) vs SVHN (OOD) exhibits similarly strong separation, achieving 95.1\% AUROC. These visualizations provide intuitive evidence for why simple proximity-based scoring in this low-dimensional feature space suffices for effective OOD detection.

\end{document}